\newcommand{\blind}{0}
\newtheorem{theorem}{Theorem}
\newtheorem{assumption}{Assumption}
\newcommand{\bX}{\boldsymbol{X}}
\newcommand{\bx}{\boldsymbol{x}}
\newcommand{\Ind}[1]{\mathbbm{1}{\left\{ {#1} \right\} }}
\def\xv{\boldsymbol x}
\def\Xv{\boldsymbol X}
\newcommand{\betav}{\mbox{\boldmath{$\beta$}}}
\begin{document}

\def\spacingset#1{\renewcommand{\baselinestretch}%
{#1}\small\normalsize} \spacingset{1}

\if0\blind
{
  \title{\bf Learning Confidence Sets using Support Vector Machines}
  \author{Wenbo Wang and Xingye Qiao\thanks{Correspondence to: Xingye Qiao (e-mail: qiao@math.binghamton.edu). Wenbo Wang is a PhD student in the Department of Mathematical Sciences at Binghamton University, State University of New York, Binghamton, New York, 13902 (E-mail: wang2@math.binghamton.edu); and Xingye Qiao is Associate Professor in the Department of Mathematical Sciences at Binghamton University, State University of New York (E-mail: qiao@math.binghamton.edu).}
    }
  \maketitle
} \fi

\if1\blind
{
  \bigskip
  \bigskip
  \bigskip
  \begin{center}
    {\LARGE\bf Support Vector Machine with Confidence}
  \end{center}
  \medskip
} \fi

\bigskip

\begin{abstract}
The goal of confidence-set learning in the binary classification setting \citep{lei2014classification} is to construct two sets, each with a specific probability guarantee to cover a class. An observation outside the overlap of the two sets is deemed to be from one of the two classes, while the overlap is an ambiguity region which could belong to either class. Instead of plug-in approaches, we propose a support vector classifier to construct confidence sets in a flexible manner. Theoretically, we show that the proposed learner can control the non-coverage rates and minimize the ambiguity with high probability. Efficient algorithms are developed and numerical studies illustrate the effectiveness of the proposed method.
\end{abstract}

\noindent%
{\it Keywords:}  Support vector machine, Dual representation, Classification with confidence, Statistical learning theory
\vfill

\newpage
\spacingset{1.45} 

\setcounter{page}{1}
\abovedisplayskip=8pt
\belowdisplayskip=8pt

\section{Introduction}\label{sec:intro}

In binary classification problems, the training data consist of independent and identically distributed pairs $(\bm{X_i},Y_i)$, $i=1,2,...,n$ drawn from an unknown joint distribution $P$, with $\bm{X_i} \in \mathcal{X}\subset \mathbb{R}^p$, and $Y_i \in \{-1,1\}$. While the misclassification rate is a good assessment of the overall  classification performance, it does not directly provide confidence for the classification decision. \citet{lei2014classification} proposed a new framework for classifiers, named classification with confidence, using notions of \textit{confidence} and \textit{efficiency}. In particular, a classifier $\phi(\bm x)$ therein  is set-valued, i.e., the decision may be $\{-1\},\{1\}$, or $\{-1,1\}$. Such a classifier corresponds to two overlapped regions in the sample space $\mathcal{X}$, $C_{-1}$ and $C_{1}$, and they satisfy that $C_{-1} \cup C_{1} = \mathcal{X}$. With these regions, we have the set-valued classifier
$$\phi(\bm x)=\begin{cases}
\{-1\}, \makebox{when }\bm x\in C_{-1}\backslash C_{1}\\
\{1\}, \makebox{when }\bm x\in C_{1}\backslash C_{-1}\\
\{-1,1\}, \makebox{when }\bm x\in C_{-1} \cap C_{1}\\
\end{cases}$$
Those points in the first two sets are classified to a single class as by traditional classifiers. However, those in the overlap receive a decision of $\{-1,1\}$, hence may belong to either class. When the option of $\{-1,1\}$ is forbidden, the set-valued classifier degenerates to a traditional classifier.

\citet{lei2014classification} defined the notion of \textit{confidence} as the probability $100(1-\alpha_{j})\%$ that set $C_j$ covers population class $j$ for $j=\pm1$ (recalling the confidence interval in statistics). The notion of \textit{efficiency} is opposite to \textit{ambiguity}, which refers to the size (or probability measure) of the overlapped region named the \textit{ambiguity region}. In this framework, one would like to encourage classifiers to minimize the ambiguity when controlling the non-coverage rates. \citet{lei2014classification} showed that the best such classifier, the Bayes optimal rule, depends on the conditional class probability function $\eta(\bm{x}) = P(Y=1|\bm{X}=\bm{x})$. \citet{lei2014classification} then proposed to use the plug-in method, namely to first estimate $\eta(\xv)$ using, for instance, logistic regression, then plug the estimation into the Bayes solution. Needless to say, its empirical performance highly depends on the estimation accuracy of $\eta(\xv)$. However, it is well known that the latter can be more difficult than mere classification \citep{wang2007probability,furnkranz2010preference,wu2010robust}, especially when the dimension $p$ is large \citep{zhang2013multicategory}. 

Support vector machine \citep[SVM;][]{cortes1995support} is a popular classification method with excellent performance for many real applications. \citet{fernandez2014we} compared 179 classifiers on 121 real data sets and concluded that SVM was among the best and most powerful classifiers. To avoid estimating the conditional class probability $\eta(\xv)$, we propose a support vector classifier to construct confidence sets by empirical risk minimization. Our method is more flexible as it takes advantage of the powerful prediction power of support vector machine.

We show in theory that the population minimizer of our optimization is to some extent equivalent to the Bayes optimal rule in \cite{lei2014classification}. Moreover, in the finite-sample case, our classifier can control both non-coverage rates while minimizing the ambiguity.

A closely related problem is the Neyman-Pearson (NP) classification \citep{cannon2002learning,rigollet2011neyman} whose goal is to find a boundary for a specific null hypothesis class. It aims to minimize the probability that an observation from the alternative class falls into this region (the type II error) while controlling the type I error, i.e., the non-coverage rate for the null class. See \citet{tong2016survey} for a survey. Our problem can be understood as a two-sided NP classification problem. Other related areas of work are conformal learning, set-valued classification, or classification with reject and refine options. See \citep{shafer2008tutorial}, \cite{denis2016confidence}, \cite{tong2016survey}, \cite{vovk2017criteria}, \cite{herbei2006classification}, \cite{bartlett2008classification} and \cite{zhang2017reject}. 

The rest of the article is organized as follows. Some background information is provided in Section \ref{sec:back}. Our main method is introduced in Section \ref{sec:svmwc}.  A comprehensive theoretical study is conducted in Section \ref{sec:theorem}, including the Fisher consistency and novel statistical learning theory.  In Section \ref{sec:algorithm}, we present efficient algorithms to implement our method. The usefulness of our method is demonstrated using simulation and real data in Section \ref{sec:num}. Detailed proofs are in the Supplementary Material. 

\section{Background and notations}\label{sec:back}
We first formally define the problem and give some useful notations.

It is desirable to keep the ambiguity as small as possible. On the other hand, we would like as many class $j$ observations as possible to be covered by $C_j$. Consider predetermined non-coverage rates $\alpha_{-1}$ and $\alpha_{1}$ for the two classes. Let $P_{-1}$ and $P_{1}$ be the probability measure of $\bm{X}$ conditional on $Y = -1$ and $+1$. Conceptually, we formulate classification with confidence as the optimization below.
\begin{align}
\min_{C_{-1},C_{1}} &~ P\left(C_{-1}\cap C_{1} \right) \quad
\textrm{subject to}  ~ P_{j}(C_{j}) \geq 1-\alpha_{j}, \ j = \pm 1, \quad  C_{-1} \cup C_{1} = \mathcal{X}. \label{classificationwithconfidence} 
\end{align}
Here the constraint that $ P_{j}(C_{j}) \geq 1-\alpha_{j}$ means that $100(1-\alpha_{j})\%$ of the observations from class $j$ should be covered by region $C_j$.


\begin{figure}[h]\vspace{-1em}
	\centering
	\includegraphics[height = 5cm,width=0.4\textwidth]{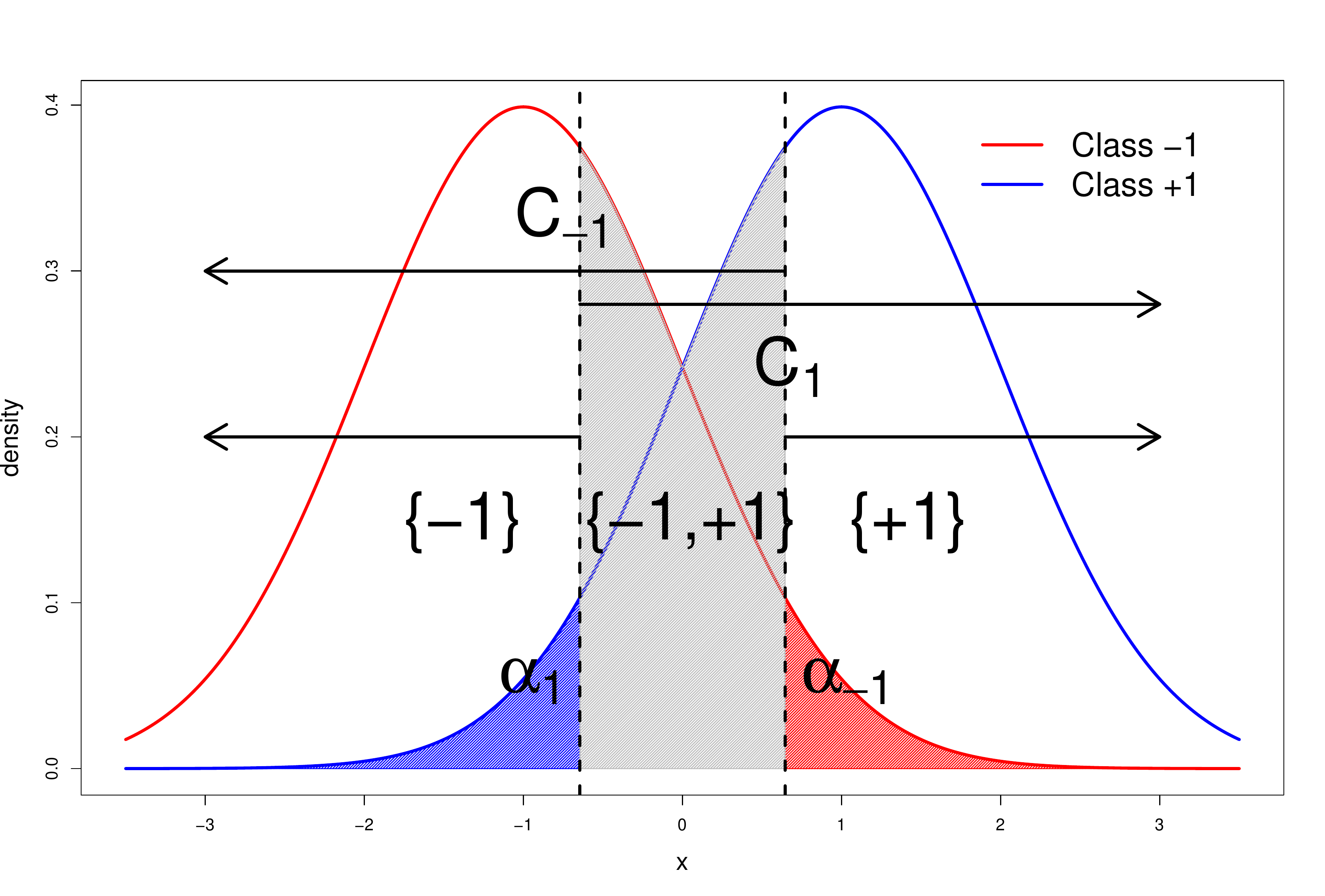}\vspace{-1em}
	\includegraphics[height = 5cm,width=0.4\textwidth]{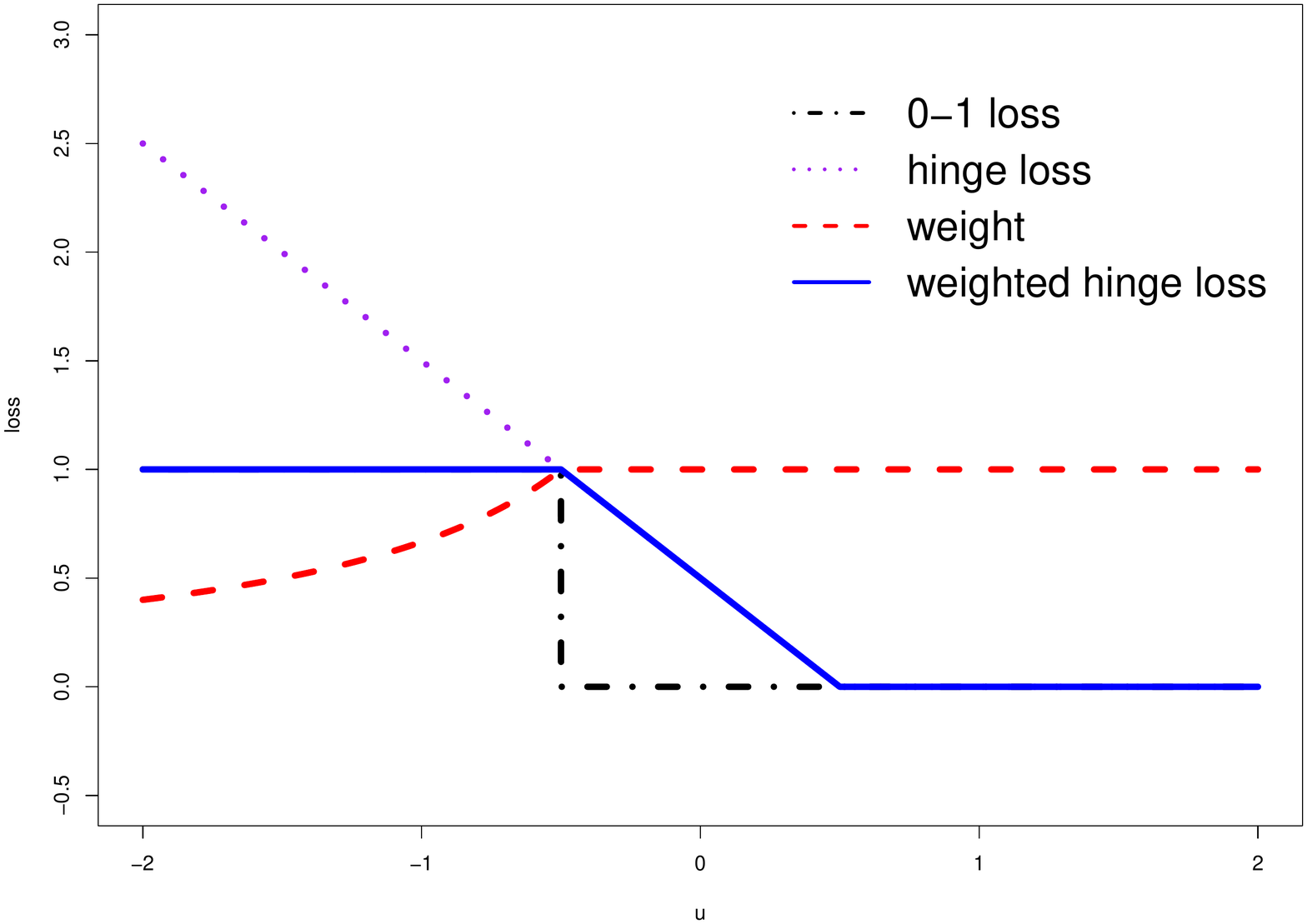}\vspace{-1em}
	\caption{The left panel shows the two definite regions and the ambiguity region in the case of symmetric Gaussian distributions. The right penal illustrates the weight function (see Section \ref{sec:svmwc}).}\label{illustration}
\end{figure}

Under certain conditions, the Bayes solution of this problem is: $C_{-1}^{*} = \{\bm x: \eta(\bm x) \le t_{-1} \} $ and $C_{1}^{*} = \{\bm x: \eta(\bm x) \ge t_{1}\}$ with $t_{-1}$ and $t_1$ satisfying that $P_{-1}(\eta(\bm X) \le t_{-1}) = 1-\alpha_{-1}$ and $P_{1}(\eta(\bm X) \ge t_{1}) = 1-\alpha_{1}$. A simple illustrative toy example with two Gaussian distributions on $\mathbb{R}$ is shown in Figure \ref{illustration}. The two boundaries are shown as the vertical lines, which lead to three decision regions, $\{-1\}$, $\{+1\}$, and $\{-1,+1\}$. The non-coverage rate $\alpha_{-1}$ for class $-1$ is shown on the right tail of the red curve (similarly, $\alpha_{1}$ for class $1$ on the left tail of blue curve.) In reality, the underlying distribution will be more complicated than a simple multivariate Gaussian distribution and the true boundary may be beyond linearity. In these cases, flexible approaches such as SVM will work better.

\section{Learning confidence sets using SVM}\label{sec:svmwc}
To avoid estimating $\eta$, we propose to solve the empirical counterpart of (\ref{classificationwithconfidence}) directly using SVM. Here, we present two variants of our method. We start with an original version to illustrate the basic idea. Then we introduce an improvement.

Unlike the regular SVM, the proposed classifier has two (not one) separating boundaries. They are defined as $\{\xv:f(\xv)=-\varepsilon\}$ and $\{\xv:f(\xv)=+\varepsilon\}$  where $f$ is the discriminant function, and $\varepsilon \geq 0$. The positive region $C_1$ is  $\{\xv:f(\xv)\ge -\varepsilon\}$ and the negative region $C_{-1}$ is $\{\xv:f(\xv)\le \varepsilon\}$. Hence when  $-\varepsilon\le f(\xv)\le \varepsilon$, observation $\xv$ falls into the ambiguity region $\{-1,1\}$.

Define $R(f,\varepsilon)=P(|Yf(\Xv)| \leq \varepsilon)$ the probability measure of the ambiguity. We may rewrite problem (\ref{classificationwithconfidence}) in terms of the function $f$ and threshold $\varepsilon$,
\begin{align}
\label{discriminatecwc}
\min_{\varepsilon \in \mathbb{R}^{+}, f} &~ R(f,\varepsilon),  \quad \textrm{subject to} \quad  P_{j}(Yf(\Xv) < -\varepsilon) \leq \alpha_{j}, \ j = \pm1. 
\end{align}
Replacing the probability measures above by the empirical measures, we can obtain,
\begin{align*}
\min_{\varepsilon \in \mathbb{R}^{+}, f} ~ \frac{1}{n} \sum_{i=1}^n\Ind{-\varepsilon\le f(\bm{x_i}) \le \varepsilon}, \quad
\textrm{subject to} ~   
\frac{1}{n_j}\sum_{i:y_i=j}\Ind{y_i f(\bm{x_i})\le -\varepsilon} \leq \alpha_j, ~ j = \pm 1.
\end{align*}
It is easy to show that as long as the equalities in the constraints are achieved at the optimum, we can obtain the same minimizer if the objective function is changed to ${\frac{1}{n}\sum_{i=1}^n\Ind{y_{i}f(\bm{x_i})-\varepsilon \leq 0}}$.

For efficient and realistic optimization, we replace the indicator function $\Ind{u\le 0}$ in the objective function and constraints by the Hinge loss function $(1-u)_+$. The practice of using a surrogate loss to bound the non-coverage rates has been widely used in the literature of NP classification, see \cite{rigollet2011neyman}. To simplify the presentation, we denote $H_{a}(u) = (1+a-u)_+$ as the $a$-Hinge Loss and it can be seen that $H_{a}(x)$ coincides with the original Hinge loss when $a = 0$. Our initial classifier can be represented by the following optimization:
\begin{align}
\min\limits_{\varepsilon \in \mathbb{R}^{+}, f} ~ \frac{1}{n}\sum\limits_{i=1}^{n}H_{\varepsilon}(y_{i}f(\bm{x_i}))+\lambda J(f), \quad
\text{subject to}  ~  \frac{1}{n_{j}}\sum\limits_{i:y_{i}=j}H_{-\varepsilon}(y_{i}f(\bm{x_i})) \leq \alpha_{j},~j=\pm 1  \label{svmwcoriginal} 
\end{align}
Here $J$ is a regularization term to control the complexity of the discriminant function $f$. When $f$ takes the linear form of  $f(\bm{x})=\bm{x}^{T}\betav+b$, $J(f)$ can be $L_{2}$-norm $\|\betav\|^2$ or $L_{1}$-norm $|\betav|$.

In SVM, $yf(\xv)$ is called the functional margin, which measures the signed distance from $\xv$ to the boundary $\{\xv:f(\xv)=0\}$. Positive and large value of $yf(\xv)$ means the observation is correctly classified, and is far away from the boundary. In our situation, we compare $yf(\xv)$ with $+\varepsilon$ and $-\varepsilon$ respectively. If $yf(\xv) < -\varepsilon$, then $\xv$ is not covered by $C_{y}$ (hence is misclassified, in the classification language). On the other hand, if $yf(\xv)\le \varepsilon$, then $\xv$ either satisfies that $yf(\xv) < -\varepsilon$ as above, or falls into the ambiguity, which is why we try to minimize the sum of $H_{\varepsilon}(y_{i}f(\bm{x_i}))$.

By constraining $\sum_{y_{i}=j}H_{-\varepsilon}(y_{i}f(\bm{x_i}))$ for both classes, we aim to control the non-coverage rates. Since $H_{-\varepsilon}(u) \geq \Ind{u<-\varepsilon}$ (the latter indicates the occurrence of non-coverage) for negatively large  $u$. It may be more conservative by using the Hinge loss than the indicator function $\Ind{y_{i}f(\bm{x_i})<-\varepsilon}$ in the constraint to control the non-coverage rates. We alleviate this problem by imposing a weight $w_{i}$ to each observation in the constraint. In particular, this weight is chosen to be $w_i=\max\{1,H_{-\varepsilon}(y\hat f(\xv))\}^{-1}$, where $\hat f$ is a reasonable guess of the final minimizer $f$. Our goal is to weight the Hinge loss in the constraint, $w_{i}H_{-\varepsilon}(y_{i}f(\bm{x_i}))$, so that it approximates the indicator function $\Ind{y_{i}f(\bm{x_i})<-\varepsilon}$. This may be illustrated by Figure \ref{illustration} in which the blue bold line is the result of multiplying the weight (red dashed) by the Hinge loss (purple dotted), which is close to the indicator function (black dot-dashed). Note that by weighting the Hinge loss, the impact of those observations with very negatively large $u=yf(\xv)$ value is reduced to 1. The adaptive weighted version of our method changes constraint (\ref{svmwcoriginal}) to $\frac{1}{n_{j}}\sum_{i:y_{i}=j}w_{i}H_{-\varepsilon}(y_{i}f(\bm{x_i})) \leq \alpha_{j}, j=\pm 1$.

In practice, we adopt an iterative approach, and use the estimated $f$ from the previous iteration to calculate the weight for each observation at the current iteration. We start with equal weights for each observation, solve the optimization problem with the weights obtained in the last iteration, and then calculate the new weights for the next iteration. \citet{wu2013adaptively} first used this idea in their work of adaptively weighted large margin classifiers for the purpose of robust classification.

\section{Theoretical Properties}\label{sec:theorem}
In this section we study the theoretical properties of the proposed method. We start with population level properties in Section \ref{subsec:fisher}. In Section \ref{subsec:finite}, we discuss the finite-sample properties using novel statistical learning theory.
\subsection{Fisher consistency and excess risk}\label{subsec:fisher}

Assume that $P_{-1}$ and $P_{1}$ are continuous with density function $p_{-1}$ and $p_{1}$, and $\pi_{j}=P(Y=j)$ is positive for $j=\pm 1$. Moreover, $\eta(\bm X)$ is continuous, and $t_{-1}$ and $t_{1}$ are quantiles of $\eta(\bm X)$. They satisfies $P_{-1}(\eta(\bm X) \leq t_{-1}) = 1-\alpha_{-1}$ and $P_{1}(\eta(\bm X) \geq t_{1}) = 1-\alpha_{1}$. We need to make assumptions on the difficulty level of the classification task. In particular, the classification should be difficult enough so that overlapping regions is meaningful (otherwise, there will be almost no ambiguity even at small non-coverage rates.)
\begin{assumption}\label{assumption1}
	$t_{-1} \geq \frac{1}{2} \geq t_{1}$.
\end{assumption}
\begin{assumption}\label{assumption2} 
	$\exists c>0$,  $t_{-1}-c \geq \frac{1}{2} \geq t_{1}+c$.
\end{assumption}
Each assumption implies that the union of $C_{-1}^{*} = \{\bm x: \eta(\bm x) \le t_{-1}\}$ and $C_{1}^{*} = \{\bm x: \eta(\bm x) \ge t_{1}\}$ is $\mathcal{X}$. Otherwise, there will be a gap around the boundary $\{\bm x:\eta(\bm x)=1/2\}.$ It is easy to see that Assumption \ref{assumption2} is stronger than Assumption \ref{assumption1}.

Fisher consistency concerns the Bayes optimal rule, which is the minimizer of problem (\ref{discriminatecwc}). In (\ref{hingedobjective}) below, we replace the loss function in the objective function of (\ref{discriminatecwc}) with risk under the Hinge loss.
\begin{align}\label{hingedobjective}
\min  &~  R_{H}(f,\varepsilon) ,  \quad  \textrm{subject to} ~ P_{j}(Yf(X) < -\varepsilon) \leq \alpha_{j}, \ j = \pm 1, 
\end{align}
where $R_{H}(f,\varepsilon) = E[H_{\varepsilon}(Yf(X))]$.

Theorem \ref{equivalenttrueloss} shows that for any fixed $\varepsilon$, the minimizer of (\ref{hingedobjective}) is the same as the Bayes rule \cite{lei2014classification}.
\begin{theorem}\label{equivalenttrueloss}
	Under Assumption \ref{assumption1}, for any fixed $\varepsilon \geq 0$, function 
	\begin{align*}
	f^{*}(x) =  \left\{\begin{array}{ll}
	1+\varepsilon, & \eta(x) > t_{-} \\
	\varepsilon \cdot\makebox{sign}(\eta(x)-\frac{1}{2}), & t_{+} \leq \eta(x) \leq t_{-} \\ 
	-(1+\varepsilon), &  f(x) < t_{+}
	\end{array}\right.
	\end{align*}
	is the minimizer to both (\ref{discriminatecwc}) and (\ref{hingedobjective}). 
\end{theorem}
A key result in \citet{bartlett2006convexity} was that the excess risk of 0-1 classification loss is bounded by the excess risk of surrogate loss. Here we show a similar result for the confidence set problem. That is, the excess ambiguity $R(f,\varepsilon) - R(f^{*},\varepsilon)$  vanishes as $R_{H}(f,\varepsilon) - R_{H}(f^{*},\varepsilon)$ goes to 0. 
\begin{theorem}\label{excessriskbound}
	Under Assumption (2), for any $\varepsilon\ge 0$, and $\forall f$ satisfying the constraints in (\ref{discriminatecwc}), there exists $C^{'} = \frac{1}{4c^{2}}+\frac{1}{2c} > 0$ such that the following inequality holds,
	\begin{align*}
	C^{'} (R_{H}(f,\varepsilon)-R_{H}(f^{*},\varepsilon)) \geq R(f) - R(f^{*}).
	\end{align*}
\end{theorem}
Note that $C^{'}$ does not depend on $\varepsilon$.

\subsection{Finite-sample properties}\label{subsec:finite}
Denote the Reproducing Kernel Hilbert Space (RKHS)  with bounded norm as $\mathcal{H}_{K}(s) = \{f:\mathcal{X}\rightarrow \mathbb{R}|f(\bx) = h(\bx)+b, h\in \mathcal{H}_{K},||h||_{\mathcal{H}_{K}}\leq s ,b \in \mathbb{R}\}$ and $r = \sup_{x \in \mathcal{X}}K(x,x)$. For a fixed $\varepsilon$, define the space of constrained discriminant functions as $\mathcal{F}_{\varepsilon}((\alpha_{-1},\alpha_{1})) = \{f:\mathcal{X}\rightarrow \mathbb{R}|E(H_{-\varepsilon}(Yf(\bX))|Y=j) \leq \alpha_{j},j = \pm 1\}$, and its empirical counterpart as $\hat{\mathcal{F}}_{\varepsilon}((\alpha_{-},\alpha_{+})) = \{f:\mathcal{X}\rightarrow \mathbb{R}|n_j^{-1}\sum_{i:y_{i}=j}H_{-\varepsilon}(y_{i}f(\bx_{i})) \leq \alpha_{j}, j = \pm 1 \}$. Moreover, we define the feasible function space $\mathcal{F}_{\varepsilon}(\kappa,s) = \mathcal{H}_{K}(s) \cap \mathcal{F}_{\varepsilon}((\alpha_{-1}-\frac{\kappa}{\sqrt{n_{-1}}},\alpha_{1}-\frac{\kappa}{\sqrt{n_{1}}}))$ and its empirical counterpart $\hat{\mathcal{F}}_{\varepsilon}(\kappa,s) = \mathcal{H}_{K}(s) \cap \hat{\mathcal{F}}_{\varepsilon}((\alpha_{-1}-\frac{\kappa}{\sqrt{n_{-1}}},\alpha_{1}-\frac{\kappa}{\sqrt{n_{1}}}))$. Lastly, consider a subset of the Cartesian product of the above feasible function space and the space for $\varepsilon$, $\mathcal{F}(\kappa,s) = \{(f,\varepsilon),f \in \mathcal{F}_{\varepsilon}(\kappa,s),\varepsilon \ge 0\}$ and its empirical counterpart $\hat{\mathcal{F}}(\kappa,s) = \{(f,\varepsilon),f \in \hat{\mathcal{F}}_{\varepsilon}(\kappa,s),\varepsilon \ge 0\}$. Then optimization problem (\ref{svmwcoriginal}) of our proposed method can be written as 
\begin{align}\label{optimizationformula}
\min\limits_{(f,\varepsilon) \in \hat{\mathcal{F}}(0,s)} & ~{\sum_{i=1}^{n}{H_{\varepsilon}(y_{i}f(\bx_{i}))}}
\end{align}
In Theorem \ref{controlerrorrate}, we give the finite-sample upper bound for the non-coverage rate.
\begin{theorem}\label{controlerrorrate}
	Let $(f,\varepsilon)$ be a solution to optimization problem (\ref{optimizationformula}), then with probability at least $1-2\zeta$, $Z = \sqrt{sr}/\sqrt{n}$, $T_{n}(\zeta) = \{2srlog(1/\zeta)/n\}^{1/2}$ and $r = \sup_{\mathcal{X}}{K(x,x)}$
	\begin{align*}
	P_{j}(Yf(X) < -\varepsilon) \leq \frac{1}{n_j}E[H_{-\varepsilon}(Yf(\bm{X})) | Y = j] 
	\leq \sum_{y_{i}=j}{H_{-\varepsilon}(y_{i}f(\bx_{i}))}+3T_{n_{j}}(\zeta)+Z(n_{j})
	\end{align*}
\end{theorem} 
Theorem \ref{controlerrorrate} suggests that if we want to control the non-coverage rate on average at the nominal $\alpha_{-1}$ or $\alpha_{1}$ rates with high probability, we should choose the $\alpha_{-1}$ or $\alpha_{1}$ values to be slightly smaller than the desired ones in optimization (\ref{svmwcoriginal}) in practice. In particular, we need to make $\sum_{y_{i}=j}{H_{-\varepsilon}(y_{i}f(\bx_{i}))}+3T_{n_{j}}(\zeta)+Z(n_{j}) \leq \alpha_{j}$. Note that the remainder terms $3T_{n_{j}}(\zeta)+Z(n_{j})$ will vanish as $n_{-1},n_{1}\rightarrow\infty$.

The next theorem ensures that the empirical ambiguity probability from solving (\ref{optimizationformula}) based on a finite sample will converge to the ambiguity given by the solution on an infinite sample (under the constraints  $E(H_{-\varepsilon}(Yf(\bX))|Y=j) \leq \alpha_{j},j = \pm 1$).


\begin{theorem}\label{finiteerrorbound}
	Let $(\hat{f},\hat{\varepsilon})$ be the solution of the optimization problem (\ref{svmwcstronger})
	\begin{align}
	\min_{(f,\varepsilon)\in\hat{\mathcal{F}}(\kappa,s)}{\sum_{i=1}^{n}{H_{\varepsilon}(y_{i}f(\bx_{i}))}} \label{svmwcstronger}
	\end{align}
	with $\kappa = (6log(\frac{1}{\zeta})+1)\sqrt{sr}$. Then with probability $1-6\zeta$, and large enough $n_{-1}$ and $n_{1}$ we have \\ 
	(i). $\hat{f} \in \mathcal{F}_{\hat{\varepsilon}}(0,s)$, and \\
	(ii). $R_{H}(\hat{f},\hat{\varepsilon})-\min\limits_{(f,\varepsilon) \in \mathcal{F}(0,s)}{R_{H}(f,\hat{\varepsilon})} \leq \kappa({2}{{n}^{-1/2}}+4\min{\{\nu_{-1},\nu_{1}\}}^{-1}\min{\{\sqrt{n_{-1}},\sqrt{n_{1}}\}}^{-1})$.
\end{theorem}

\section{Algorithms}\label{sec:algorithm}
In this section, we give details of the algorithm. Similar to the SVM implementation, we propose to solve the dual problem. We start with the linear SVM with $L_2$ norm for illustrative purposes. After introducing two sets of slack variables, $\eta_{i}=(1-\varepsilon-y_i(\bm{x_i}^{T}\betav+b))_{+}$ and $\xi_{i}=(1+\varepsilon-y_i(\bm{x_i}^{T}\betav+b))_{+}$, we can show that
(\ref{svmwcoriginal}) is equivalent to (\ref{svmversionofsvmwc}),
\begin{align}
\min_{\Theta}  ~&  {\frac{1}{2}||\betav||_{2}^{2} + \lambda' \sum\limits_{i}^{n}{\xi_{i}}} \label{svmversionofsvmwc}\\ 
\nonumber \textrm{subject to} ~   y_i(\bm{x_i}^{T}\betav+b) \geq 1+\varepsilon-\xi_i, \quad & y_i(\bm{x_i}^{T}\betav+b) \geq 1-\varepsilon-\eta_i \quad \textrm{for all} \ i=1,2,...,n, \\
\nonumber  \xi_i \geq 0, \quad \sum\limits_{y_{i}=-1}{w_{i}\eta_{i}} \leq n_{-1}\alpha_{-1}, \quad & \eta_i \geq 0, \quad \sum\limits_{y_{i}=1}{w_{i}\eta_{i}} \leq n_{1}\alpha_{1}, \quad \varepsilon \geq 0. 
\end{align}
Here $\Theta$ is the collection of all variables of interest, namely $\Theta = \{\varepsilon,\betav,b,\{\xi_i\}_{i=1}^{n},\{\eta_i\}_{i=1}^{n}\}$. We can then solve it via the quadratic programming below,
\begin{align}
\min_{\Theta'}~&{\frac{1}{2}\sum_{i=1}^{n}\sum_{j=1}^{n}}{{(\zeta_{i}+\tau_{i})(\zeta_{j}+\tau_{j})y_{i}y_{j}} \xv_{i}'\xv_{j}}  
-\sum_{i=1}^{n}{\zeta_i}-\sum_{i=1}^{n}{\tau_i}+ n_{-1}\alpha_{-1}\theta_{-1} + n_{1}\alpha_{1}\theta_{1} \label{dual}\\
\nonumber \textrm{subject to} ~ & 0 \leq \zeta_{i} \leq \lambda', \quad  0 \leq \tau_{i} \leq \theta_{y_{i}}w_{i},   \quad 
\sum_{i=1}^{n}{\zeta_{i}y_{i}} + \sum_{i=1}^{n}{\tau_{i}y_{i}}  = 0,\quad \sum_{i=1}^{n}{\zeta_{i}}  -\sum_{i=1}^{n}{\tau_{i}}  \geq 0.
\end{align} 
Here $\Theta'=\{\{\zeta_i\}_{i=1}^{n},\{\tau_i\}_{i=1}^{n},\theta_{-1},\theta_{1}\}$ consists of all the variables in the dual problem. The above optimization may be solved by any efficient quadratic programming routine. After solving the dual problem, we can find $\betav$ by $\betav = \sum_{i}^{n}{\zeta_{i}y_{i}\bm{x_i}}+\sum_{i}^{n}{\tau_{i}y_{i}\bm{x_i}}$. Then we can plug $\betav$ into the primal problem and find $b$ and $\varepsilon$ by linear programming.

For nonlinear $f$, we can adopt the widely used `kernel trick'. Assume $f$ belongs to a Reproducing Kernel Hilbert Space (RKHS) with a positive definite kernel $K$, $f(x) = \sum_{i=1}^{n}c_{i}{K(\xv_{i},\xv)} + b$. In this case the dual problem is the same as above except that $\xv_{i}'\xv_{j}$ is replaced by $K(\xv_{i},\xv_{j})$. After the solution has been found, we then have $c_{i} = \zeta_{i} + \tau_{i}$. Common choices for the kernel function includes the Gaussian kernel and the polynomial kernel.

\section{Numerical Studies}\label{sec:num}
In this section, we compare our confidence-support vector machine (CSVM) method and methods based on the plug-in principal, including $L_2$ penalized logistic regression \citep{le1992ridge}, kernel logistic regression \citep{zhu2005kernel}, kNN \cite{altman1992introduction}, random forest \citep{liaw2002classification} and SVM \citep{cortes1995support,platt1999probabilistic} using both simulated and real data.
\subsection{Simulation}\label{subsec:simulation}
We study the numerical performance over a large variety of sample sizes. In each case, an independent tuning set with the same sample size as the training set is generated for parameter tuning. The testing set has 20000 observations (10000 or nearly 10000 for each class). We run the simulation 100 times and report the average and standard error. Both non-coverage rates are set to 0.05. 

We select the best parameter $\lambda$ and the hyper-parameter for kernel methods as follows. We search for the optimal $\rho$ in the Gaussian kernel  $\exp{(-\|x-y\|^{2}/\rho^2)}$ from the grid $10^{\{-0.5,-0.25,~0,~0.25,~0.5,~0.75,~1\}}$ and the optimal degree for polynomial kernel from $\{2,~3,~4\}$. For each fixed candidate hyper-parameter, we choose $\lambda$ from a grid of candidate values ranging from $10^{-8}$ to $10^{4}$ by the following two-step searching scheme. We first do a rough search with a larger stride $\{10^{-8}, 10^{-7.5}, \dots, 10^{4}\}$ and get the best parameter $\lambda_1$. Then we do a fine search from $\lambda_1 \times \{10^{-0.5},10^{-.4},\dots,10^{0.5}\}$.  After that, we choose the optimal pair which gives the smallest tuning ambiguity and has the two non-coverage rates for the tuning set controlled.

To improve the performance, we make use of the suggested robust implementation in \cite{lei2014classification} for all the methods. Following \cite{lei2014classification}, we first obtain an estimate of $\eta$ or a monotone proxy of it such as the discriminant function $f$ in SVM, then choose thresholds $\hat{t}_{-1}$ and $\hat{t}_{1}$ which are two sample quantiles of $\widehat\eta(\bm x)$ (or $f(\bm x)$) among the tuning set so that the non-coverage rates for the tuning set match the nominal rates. The final predicted sets are induced by thresholding $\widehat\eta(\bm x)$ (or $f(\bm x)$) using $\hat{t}_{-1}$ and $\hat{t}_{1}$. 

Because there are two non-coverage rates and one ambiguity size to compare here, how to make fair comparison becomes a tricky problem since one classifier can sacrifice the non-coverage rate to gain in ambiguity. One by-product of the robust implementation above is that the non-coverage rate of most of the methods will become very similar and we only need to compare the size of the ambiguity. 

We also include a simple SVM approach whose discriminant function is obtained in the traditional way, but which induces confidence sets by thresholding in the same way described above.

We consider three different simulation scenarios. In the first scenario we compare the linear approaches (SVM and penalized logistic regression), while in the next two cases we consider nonlinear methods. In all cases, we add additional noise dimensions to the data. These noise covariates are normally distributed with mean $\bm{0}$ and $\Sigma = \text{diag}(1/p)$, where $p$ is the total dimension of the data.

\textbf{Example 1 (Linear model with nonlinear Bayes rule):} In this scenario, we have two normally distributed classes with different covariance matrices. In particular, denote $X|Y=j \sim \mathcal{N}(\mu_{j},\Sigma_{j})$ for $j=\pm 1$, then $\mu_{-1} = (-2,1)^{T}$, $\mu_{1} = (1,0)^{T}$, and $\Sigma_{-1} = \text{diag}(2,\frac{1}{2})$, $\Sigma_{1} = \text{diag}(\frac{1}{2},2)$. The prior probabilities of both classes are the same. Lastly, we add eight dimensions of noise covariates to the data. The data are illustrated in the left penal of Figure \ref{fig:simulation}. 
We compare linear CSVM, and the plug-in methods $L_2$ penalized logistic regression \citep{friedman2010regularization} and simple linear SVM to estimate $\eta$.

\textbf{Example 2 (Moderate dimensional polynomial boundary):} This case is similar to the one in \citep{zhang2008variable}. First we generate $x_1 \sim \makebox{Unif}[-1,1]$ and $x_2 \sim \makebox{Unif}[-1,1]$. Define functions $f_{j}(\xv) = j(-3.6 x_1^2 + 7.2 x_2^2 - 0.8),j = \pm 1$. Then we set $\eta(\xv) = f_{1}(\xv)/(f_{-1}(\xv)+f_{1}(\xv))$, where $\xv = (x_1,x_2)$. We then add 98 covariates on top of the 2-dimensional signal. The data are illustrated in the middle penal of Figure \ref{fig:simulation}. In this scenario, we choose to use the polynomial kernel for all the kernel based methods. 

\textbf{Example 3 (High-dimensional donut):} We first generate a two-dimensional data, $(r_i,\theta_i)$ where $\theta_i\sim \makebox{Unif}[0,2\pi]$, $r_i|(Y=-1)\sim \makebox{Uniform}[0,1.2]$, and $r_i|(Y=+1)\sim \makebox{Unif}[0.8,2]$. Then we define the two-dimensional $\bm{X}_i=(r_i \cos(\theta_i), r_i\sin(\theta_i))$. The data are illustrated in the right penal of Figure \ref{fig:simulation}.  We then add 498 covariates on top of the 2-dimensional signal.  We use the Gaussian kernel, $K(x,y;\rho) = \exp{(-\|x-y\|^{2}/\rho^2)}$ for all the kernel based methods.

\begin{figure}[h]
	\centering
	\vspace{-0.3cm}
	\includegraphics[width=0.32\textwidth,trim={0 2em 0 1em}]{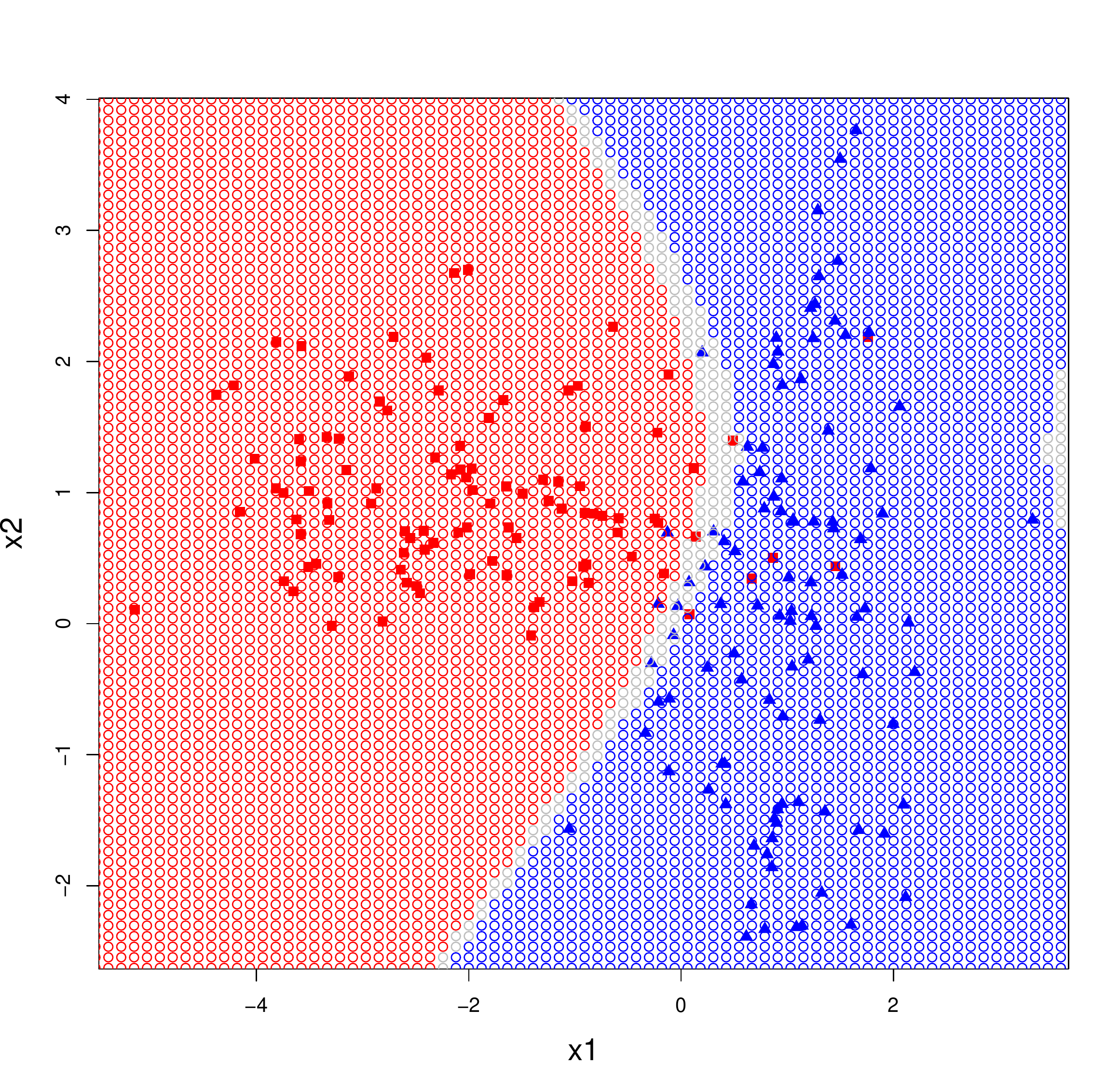}
	\includegraphics[width=0.32\textwidth,trim={0 2em 0 1em}]{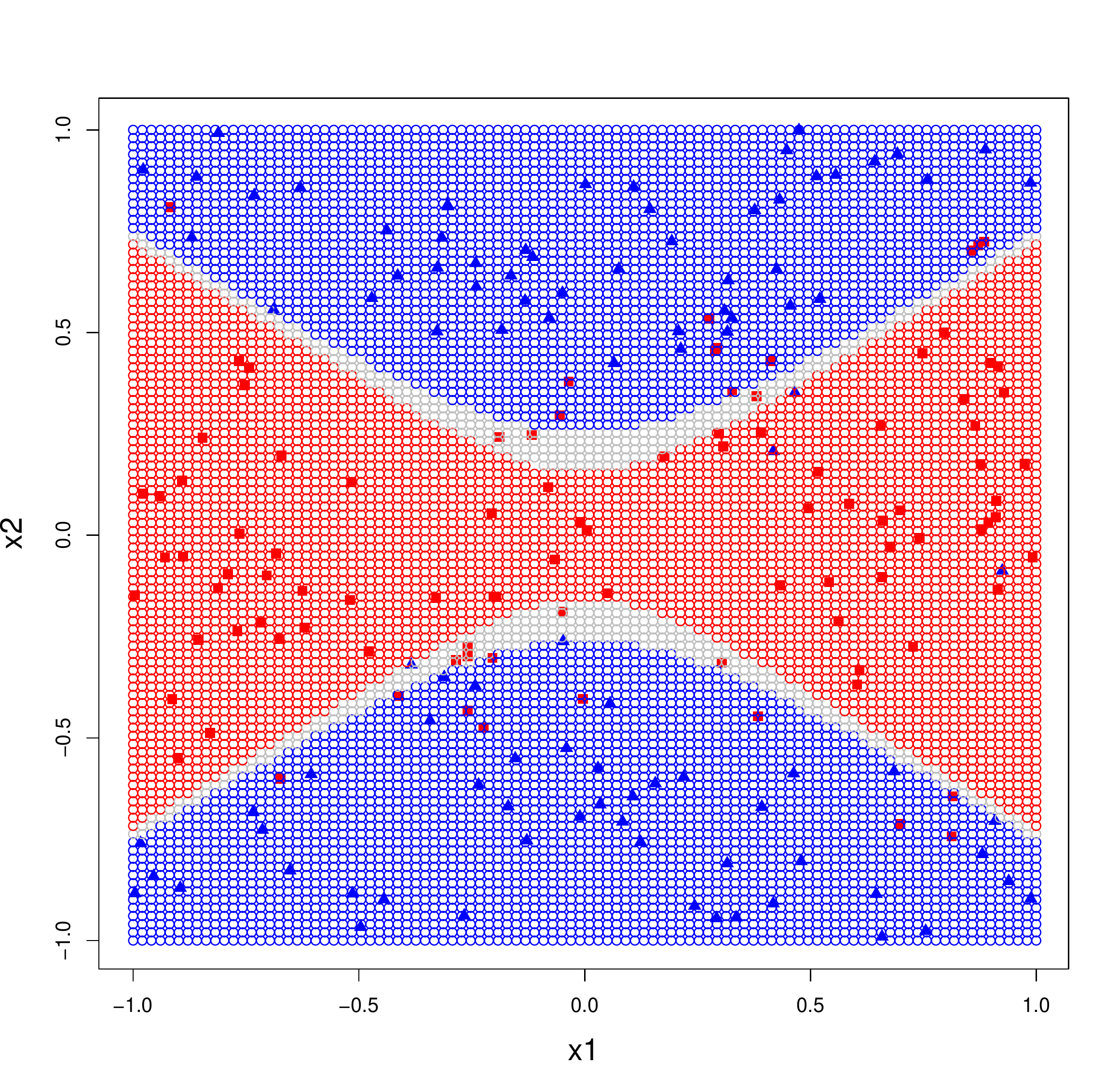}
	\includegraphics[width=0.32\textwidth,trim={0 2em 0 1em}]{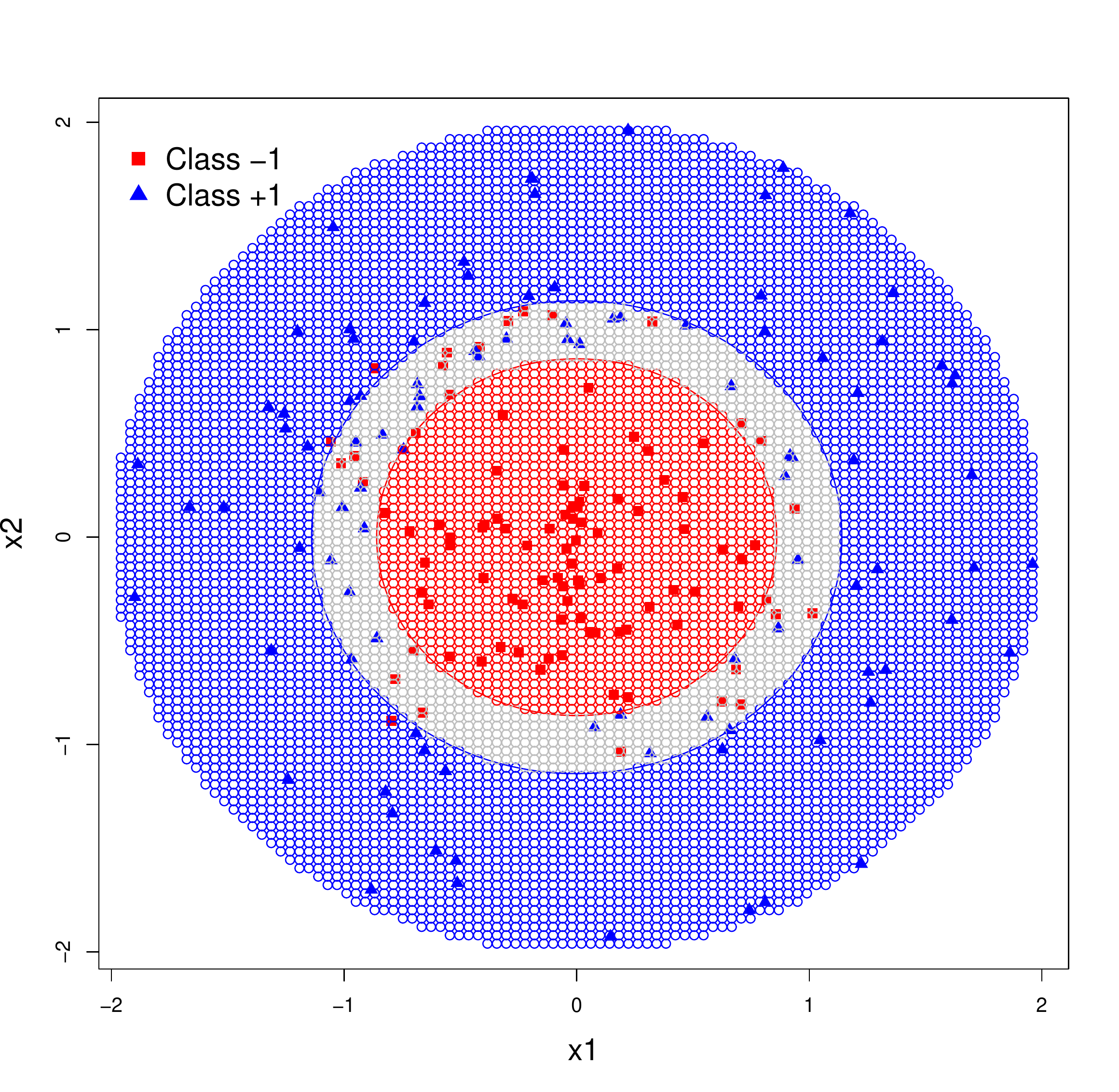}
	\vspace{-1em}
	\caption{Scatter plots of the first two dimensions for the simulated data with Bayes rules showing the two definite regions and the ambiguity region.}\label{fig:simulation}
\end{figure}

All methods are improved using the robust implementation. The results are reported in Figure \ref{fig:simulation_outcome}. We also show the performance of CSVM with weighting but without robust implementation. For {Example 1}, our CSVM method gives a significantly smaller ambiguity than either logistic regression or naive SVM. In {Example 2} and {Example 3}, our method gives a smaller or at least comparable ambiguity to the best plug-in method, which is kernel logistic regression. Our weighted CSVM performs the best when sample size is small in the linear case and it outperforms kNN, Random Forest and naive SVM in nonlinear cases. The naive SVM method which directly uses simple SVM to conduct confidence set learning performs significantly worse than all the other methods in nonlinear cases. The non-coverage rates (not shown here) of CSVM, random forest, kernel logistic regression and naive SVM methods are close to each other while CSVM without robust implmentation and kNN have similar non-coverage rates. A detailed comparison can be found in the Supplementary Material.

\begin{figure}[H]
	\begin{center}
		\includegraphics[width=0.9\textwidth,trim={0 2em 0 3em}]{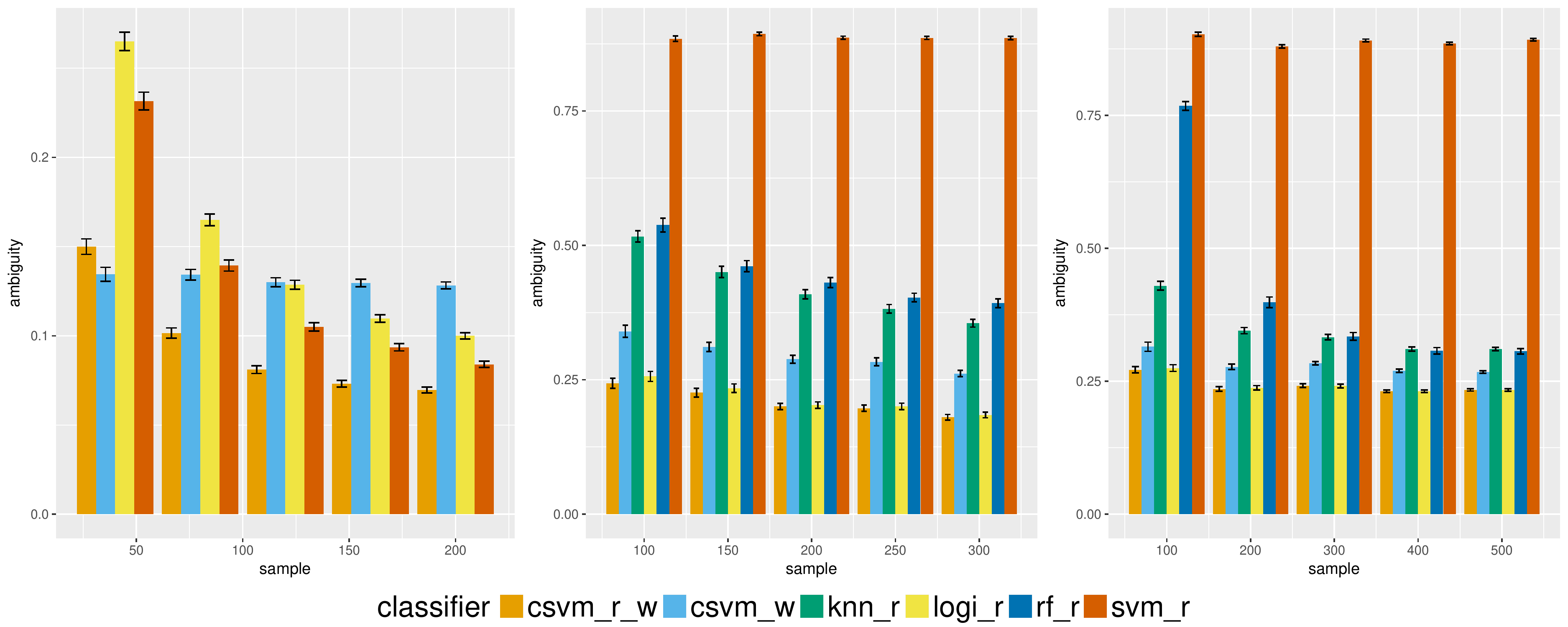}
	\end{center}
	\caption{Outcome of ambiguities in three simulation settings. Non-coverage rates are similar among different methods and are not shown here. CSVM has the smallest ambiguity.}\label{fig:simulation_outcome}
\end{figure}

\subsection{Real Data Analysis}\label{subsec:real}
We conduct the comparison on the hand-written zip code data \cite{lecun1989backpropagation}. The data set consists of many $16 \times 16$ pixel images of handwritten digits. It is widely used in the classification literature. There are both training and testing sets defined in it. \citet{lei2014classification} used the same dataset for illustrating the plug-in methods. We choose this dataset to directly compare with the plug-in methods.

Following \citet{lei2014classification}, to form a binary classification problem, we use the subset of the data containing digits $\{0,6,8,9\}$. Images with digits 0, 6, 9 are labeled as class $-1$ (they are digits with one circle) and those with digit 8 (two circles) are labeled as class $+1$. Previous studies \cite{shafer2008tutorial} pointed out that there was discrepancies between the training and testing set of this data set. So in this study we first mixed the training and testing data and then randomly split into new training, tuning and testing data. The training and tuning data both have sample size 800, with 600 from class $-1$ and 200 from class $1$ to preserve the unbalance nature of the data set. During training, we oversample class $1$ by counting each observation three times to alleviate the unbalanced classes issue.

Although \citet{lei2014classification} set both nominal non-coverage rates to be 0.05 in their study which focused on linear methods, it needs to be pointed out that many nonlinear classifiers, such as SVM with Gaussian kernel, can achieve this non-coverage rate without introducing any ambiguity. Therefore we reduce the non-coverage rate to 0.01 for both classes to make the task more challenging.

We apply Gaussian kernel for CSVM, and compare with kernel logistic regression with Gaussian kernel, random forest, kNN and naive SVM with Gaussian kernel on this data set. 

\begin{table}[H]\small
	\centering
	\begin{tabular}{lllllll}
		\hline
		Classifier & CSVM       & CSVM(r)   & KNN(r)         & KLR(r)      & RF(r)         & SVM-Prob(r)   \\
		\hline
		Non-coverage(-1)    & 0.05(0.005) & {1.02}(0.05) & 0.81(0.04)  & 0.98(0.05) & 0.95(0.04) & {1.00}(0.05) \\
		Non-coverage(+1)    & 0.56(0.06)  & {1.19}(0.11) & 1.04(0.09)  & 1.25(0.10) & 1.10(0.11) & {1.25}(0.11) \\
		Ambiguity  & 8.29(0.18)  & {2.52}(0.13) & 10.21(2.12) & 3.46(0.17) & 7.55(0.37) & {2.60}(0.13) \\
		\hline
	\end{tabular}
	\vspace{0.1cm}
	\caption{CSVM gives better or comparable outcome to the best plug-in method.}\label{real_data_outcome}\vspace{-2em}
\end{table}

The results are summarized in Table \ref{real_data_outcome} with numbers in percentage. CSVM gives better results than all the plug-in methods. We plot the zip code data using t-distributed stochastic neighbor embedding (t-SNE) \citep{maaten2008visualizing} to give a visualization of our method and the data.

\begin{figure}[H]
	\begin{center}
		\includegraphics[width=0.8\textwidth,trim={0 2em 0 3em}]{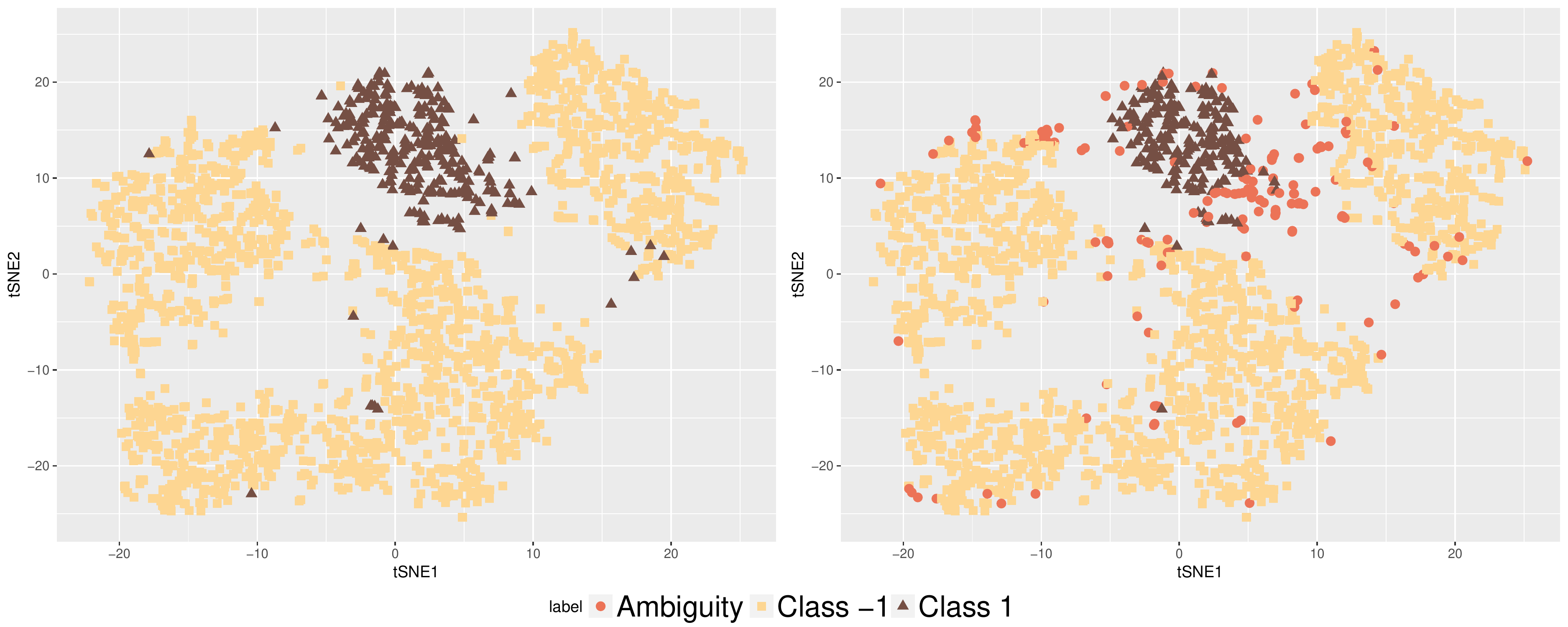}
	\end{center}
	\caption{An illustration of CSVM method using t-SNE. The left penal shows the true labels, and the right panel the predicted label for weighted CSVM.}\label{fig:realdata_illustration}
\end{figure}
\vspace{-0.5cm}
It can be seen that the ambiguity region mainly lies on the boundary between the two classes. In particular, they cover those points which appear to be closer to the class other than the one they really belong to. Moreover, it can be seen that the union of the ambiguity region and the predicted region for either class, covers almost all the ground of that class (defined by the true labels). This is not surprising since the non-coverage rate of CSVM is set to be a small number of 1\% in this case.

\section{Conclusion and future works}\label{conclusion}
In this work, we propose to learn confidence sets using support vector machine. Instead of a plug-in approach, we use empirical risk minimization to train the classifier. Theoretical studies have shown the effectiveness of our approach in controlling the non-coverage rate and minimizing the ambiguity. 

We make use of many well understood advantages of SVM to solve the problem. For instance the `kernel trick' allows more flexibility and empowers us to conduct classification in nonlinear cases.

Hinge loss function is not the only surrogate loss that can be used. There are many other useful loss functions with good properties in different scenarios \cite{liu2011hard}.

Confidence set learning for multi-class case is also an interesting future work. This  has a natural connection to the literature of multi-class classification with confidence \citep{sadinle2017least}, classification with reject and refine options \citep{zhang2017reject} and conformal learning \citep{shafer2008tutorial}. 

\bibliographystyle{plainnat}
\bibliography{LCSuSVM}

\begin{thebibliography}{29}
\newcommand{\enquote}[1]{``#1''}
\expandafter\ifx\csname natexlab\endcsname\relax\def\natexlab#1{#1}\fi

\bibitem[{Altman(1992)}]{altman1992introduction}
Altman, N.~S. (1992), \enquote{An introduction to kernel and nearest-neighbor
  nonparametric regression,} \textit{The American Statistician}, 46, 175--185.

\bibitem[{Bartlett et~al.(2006)Bartlett, Jordan, and
  McAuliffe}]{bartlett2006convexity}
Bartlett, P.~L., Jordan, M.~I., and McAuliffe, J.~D. (2006),
  \enquote{Convexity, classification, and risk bounds,} \textit{Journal of the
  American Statistical Association}, 101, 138--156.

\bibitem[{Bartlett and Wegkamp(2008)}]{bartlett2008classification}
Bartlett, P.~L. and Wegkamp, M.~H. (2008), \enquote{Classification with a
  reject option using a hinge loss,} \textit{Journal of Machine Learning
  Research}, 9, 1823--1840.

\bibitem[{Cannon et~al.(2002)Cannon, Howse, Hush, and
  Scovel}]{cannon2002learning}
Cannon, A., Howse, J., Hush, D., and Scovel, C. (2002), \enquote{Learning with
  the Neyman-Pearson and min-max criteria,} \textit{Los Alamos National
  Laboratory, Tech. Rep. LA-UR}, 02--2951.

\bibitem[{Cortes and Vapnik(1995)}]{cortes1995support}
Cortes, C. and Vapnik, V. (1995), \enquote{Support-vector networks,}
  \textit{Machine learning}, 20, 273--297.

\bibitem[{Denis and Hebiri(2016)}]{denis2016confidence}
Denis, C. and Hebiri, M. (2016), \enquote{Confidence sets with expected sizes
  for Multiclass Classification,} \textit{arXiv preprint arXiv:1608.08783}.

\bibitem[{Fern{\'a}ndez-Delgado et~al.(2014)Fern{\'a}ndez-Delgado, Cernadas,
  Barro, and Amorim}]{fernandez2014we}
Fern{\'a}ndez-Delgado, M., Cernadas, E., Barro, S., and Amorim, D. (2014),
  \enquote{Do we need hundreds of classifiers to solve real world
  classification problems,} \textit{J. Mach. Learn. Res}, 15, 3133--3181.

\bibitem[{Friedman et~al.(2010)Friedman, Hastie, and
  Tibshirani}]{friedman2010regularization}
Friedman, J., Hastie, T., and Tibshirani, R. (2010), \enquote{Regularization
  paths for generalized linear models via coordinate descent,} \textit{Journal
  of statistical software}, 33, 1.

\bibitem[{F{\"u}rnkranz and H{\"u}llermeier(2010)}]{furnkranz2010preference}
F{\"u}rnkranz, J. and H{\"u}llermeier, E. (2010), \enquote{Preference learning:
  An introduction,} in \textit{Preference learning}, Springer, pp. 1--17.

\bibitem[{Herbei and Wegkamp(2006)}]{herbei2006classification}
Herbei, R. and Wegkamp, M.~H. (2006), \enquote{Classification with reject
  option,} \textit{Canadian Journal of Statistics}, 34, 709--721.

\bibitem[{Le~Cessie and Van~Houwelingen(1992)}]{le1992ridge}
Le~Cessie, S. and Van~Houwelingen, J.~C. (1992), \enquote{Ridge estimators in
  logistic regression,} \textit{Applied statistics}, 191--201.

\bibitem[{LeCun et~al.(1989)LeCun, Boser, Denker, Henderson, Howard, Hubbard,
  and Jackel}]{lecun1989backpropagation}
LeCun, Y., Boser, B., Denker, J.~S., Henderson, D., Howard, R.~E., Hubbard, W.,
  and Jackel, L.~D. (1989), \enquote{Backpropagation applied to handwritten zip
  code recognition,} \textit{Neural computation}, 1, 541--551.

\bibitem[{Lei(2014)}]{lei2014classification}
Lei, J. (2014), \enquote{Classification with confidence,} \textit{Biometrika},
  asu038.

\bibitem[{Liaw et~al.(2002)Liaw, Wiener, et~al.}]{liaw2002classification}
Liaw, A., Wiener, M., et~al. (2002), \enquote{Classification and regression by
  randomForest,} \textit{R news}, 2, 18--22.

\bibitem[{Liu et~al.(2011)Liu, Zhang, and Wu}]{liu2011hard}
Liu, Y., Zhang, H.~H., and Wu, Y. (2011), \enquote{Hard or soft classification?
  Large-margin unified machines,} \textit{Journal of the American Statistical
  Association}, 106, 166--177.

\bibitem[{Maaten and Hinton(2008)}]{maaten2008visualizing}
Maaten, L. v.~d. and Hinton, G. (2008), \enquote{Visualizing data using t-SNE,}
  \textit{Journal of machine learning research}, 9, 2579--2605.

\bibitem[{Platt et~al.(1999)}]{platt1999probabilistic}
Platt, J. et~al. (1999), \enquote{Probabilistic outputs for support vector
  machines and comparisons to regularized likelihood methods,} \textit{Advances
  in large margin classifiers}, 10, 61--74.

\bibitem[{Rigollet and Tong(2011)}]{rigollet2011neyman}
Rigollet, P. and Tong, X. (2011), \enquote{Neyman-pearson classification,
  convexity and stochastic constraints,} \textit{Journal of Machine Learning
  Research}, 12, 2831--2855.

\bibitem[{Sadinle et~al.(2017)Sadinle, Lei, and Wasserman}]{sadinle2017least}
Sadinle, M., Lei, J., and Wasserman, L. (2017), \enquote{Least ambiguous
  set-valued classifiers with bounded error levels,} \textit{Journal of the
  American Statistical Association}.

\bibitem[{Shafer and Vovk(2008)}]{shafer2008tutorial}
Shafer, G. and Vovk, V. (2008), \enquote{A tutorial on conformal prediction,}
  \textit{Journal of Machine Learning Research}, 9, 371--421.

\bibitem[{Tong et~al.(2016)Tong, Feng, and Zhao}]{tong2016survey}
Tong, X., Feng, Y., and Zhao, A. (2016), \enquote{A survey on Neyman-Pearson
  classification and suggestions for future research,} \textit{Wiley
  Interdisciplinary Reviews: Computational Statistics}, 8, 64--81.

\bibitem[{Vovk et~al.(2017)Vovk, Nouretdinov, Fedorova, Petej, and
  Gammerman}]{vovk2017criteria}
Vovk, V., Nouretdinov, I., Fedorova, V., Petej, I., and Gammerman, A. (2017),
  \enquote{Criteria of efficiency for set-valued classification,}
  \textit{Annals of Mathematics and Artificial Intelligence}, 1--26.

\bibitem[{Wang et~al.(2007)Wang, Shen, and Liu}]{wang2007probability}
Wang, J., Shen, X., and Liu, Y. (2007), \enquote{Probability estimation for
  large-margin classifiers,} \textit{Biometrika}, 95, 149--167.

\bibitem[{Wu and Liu(2013)}]{wu2013adaptively}
Wu, Y. and Liu, Y. (2013), \enquote{Adaptively weighted large margin
  classifiers,} \textit{Journal of Computational and Graphical Statistics}, 22,
  416--432.

\bibitem[{Wu et~al.(2010)Wu, Zhang, and Liu}]{wu2010robust}
Wu, Y., Zhang, H.~H., and Liu, Y. (2010), \enquote{Robust model-free multiclass
  probability estimation,} \textit{Journal of the American Statistical
  Association}, 105, 424--436.

\bibitem[{Zhang and Liu(2013)}]{zhang2013multicategory}
Zhang, C. and Liu, Y. (2013), \enquote{Multicategory large-margin unified
  machines,} \textit{The Journal of Machine Learning Research}, 14, 1349--1386.

\bibitem[{Zhang et~al.(2017)Zhang, Wang, and Qiao}]{zhang2017reject}
Zhang, C., Wang, W., and Qiao, X. (2017), \enquote{On Reject and Refine Options
  in Multicategory Classification,} \textit{Journal of the American Statistical
  Association}, accepted.

\bibitem[{Zhang et~al.(2008)Zhang, Liu, Wu, Zhu, et~al.}]{zhang2008variable}
Zhang, H.~H., Liu, Y., Wu, Y., Zhu, J., et~al. (2008), \enquote{Variable
  selection for the multicategory SVM via adaptive sup-norm regularization,}
  \textit{Electronic Journal of Statistics}, 2, 149--167.

\bibitem[{Zhu and Hastie(2005)}]{zhu2005kernel}
Zhu, J. and Hastie, T. (2005), \enquote{Kernel logistic regression and the
  import vector machine,} \textit{Journal of Computational and Graphical
  Statistics}, 14, 185--205.

\end{thebibliography}


\begin{thebibliography}{3}
\providecommand{\natexlab}[1]{#1}
\providecommand{\url}[1]{\texttt{#1}}
\expandafter\ifx\csname urlstyle\endcsname\relax
  \providecommand{\doi}[1]{doi: #1}\else
  \providecommand{\doi}{doi: \begingroup \urlstyle{rm}\Url}\fi

\bibitem[Lei(2014)]{lei2014classification}
Jing Lei.
\newblock Classification with confidence.
\newblock \emph{Biometrika}, page asu038, 2014.

\bibitem[Mohri et~al.(2012)Mohri, Rostamizadeh, and
  Talwalkar]{mohri2012foundations}
Mehryar Mohri, Afshin Rostamizadeh, and Ameet Talwalkar.
\newblock \emph{Foundations of machine learning}.
\newblock MIT press, 2012.

\bibitem[Rigollet and Tong(2011)]{rigollet2011neyman}
Philippe Rigollet and Xin Tong.
\newblock Neyman-pearson classification, convexity and stochastic constraints.
\newblock \emph{Journal of Machine Learning Research}, 12\penalty0
  (Oct):\penalty0 2831--2855, 2011.

\end{thebibliography}

\end{document}


\def\spacingset#1{\renewcommand{\baselinestretch}%
{#1}\small\normalsize} \spacingset{1}

\if0\blind
{
  \title{\bf Supplementary Material of "Learning Confidence Sets using Support Vector Machines"}
  \author{Wenbo Wang and Xingye Qiao\thanks{Correspondence to: Xingye Qiao (e-mail: qiao@math.binghamton.edu). Wenbo Wang is a PhD student in the Department of Mathematical Sciences at Binghamton University, State University of New York, Binghamton, New York, 13902 (E-mail: wang2@math.binghamton.edu); and Xingye Qiao is Associate Professor in the Department of Mathematical Sciences at Binghamton University, State University of New York (E-mail: qiao@math.binghamton.edu).}
}
  \maketitle
}\fi

\if1\blind
{
  \bigskip
  \bigskip
  \bigskip
  \begin{center}
    {\LARGE\bf Support Vector Machine with Confidence: Supplementary Material}
  \end{center}
  \medskip
} \fi

\setcounter{page}{1}
\abovedisplayskip=8pt
\belowdisplayskip=8pt

\section{Proof}

This part will give proofs to some of the statements and theorems in the main part.

\subsection*{Proof of Dual Representation}

Firstly, the Lagrangian of problem $(6)$ is 

\begin{align}
\nonumber L(\beta,b,\{\xi_i\}_{i=1}^{n},\{\eta_i\}_{i=1}^{n},\varepsilon)=& \frac{1}{2}||\beta||_{2}^{2}+\lambda' \sum_{i=1}^{n}{\xi_i}+\sum_{i=1}^{n}\zeta_{i}(1+\varepsilon-\xi_i-y_i(\bm{x_i}^{T}\beta+b)) \\
\nonumber &  +\sum_{i=1}^{n}\tau_{i}(1-\varepsilon-\eta_i-y_i(\bm{x_i}^{T}\beta+b)) -\sum_{i=1}^{n}\rho_{i}\xi_i -\sum_{i=1}^{n}\gamma_{i}\eta_i\\
&  + \theta_{-1}(\sum_{y_{i}=-1}{w_{i}\eta_{i}}-n_{-1}\alpha_{-1}) + \theta_{1}(\sum_{y_{i}=1}{w_{i}\eta_{i}}-n_{1}\alpha_{1}) - \nu \varepsilon
\label{lagrangian}
\end{align}

Then we consider the Karush-Kuhn-Tucker conditions. We write $L(\beta,b,\varepsilon,\{\xi_i\}_{i=1}^{n},\{\eta_i\}_{i=1}^{n})$ as $L$ for simplicity.

\begin{align*}
& \frac{\partial L}{\partial \beta}=\beta-\sum_{i}^{n}{\xi_{i}y_{i}\bm{x_i}}-\sum_{i}^{n}{\tau_{i}y_{i}\bm{x_i}}=0 \\
& \frac{\partial L}{\partial b}=-\sum_{i=1}^{n}{\xi_{i}y_{i}}-\sum_{i=1}^{n}{\tau_{i}y_{i}} = 0  \\
& \frac{\partial L}{\partial \xi_i}=\lambda-\xi_{i}-\rho_{i}=0 \quad \textrm{for} \quad \forall i  \\
& \frac{\partial L}{\partial \eta_i}=-\tau_{i}-\gamma_{i}+w_{i}\theta_{-1}=0 \quad \textrm{for} \quad y_{i}=-1 \\
& \frac{\partial L}{\partial \eta_i}=-\tau_{i}-\gamma_{i}+w_{i}\theta_{1}=0 \quad \textrm{for} \quad y_{i}=1 \\
& \frac{\partial L}{\partial \varepsilon}=\sum_{i}^{n}{\zeta_{i}}-\sum_{i}^{n}{\tau_{i}}-\nu=0 \\
& \zeta_{i}(1+\varepsilon-\xi_i-y_i(\bm{x_i}^{T}\beta-b))=0 \quad \textrm{for} \quad i=1,2,...,n \\
& \tau_{i}(1-\varepsilon-\eta_i-y_i(\bm{x_i}^{T}\beta-b))=0 \quad \textrm{for} \quad i=1,2,...,n \\
& \rho_{i}\xi_{i}=0 \quad \textrm{for} \quad i=1,2,...,n \\
& \gamma_{i}\eta_{i}=0 \quad \textrm{for} \quad i=1,2,...,n \\
& \theta_{-1}(\sum_{y_{i}=-1}{\eta_{i}}-n_{-1}\alpha_{-1})=0 \\
& \theta_{1}(\sum_{y_{i}=1}{\eta_{i}}-n_{1}\alpha_{1})=0 \\
& \nu \varepsilon = 0
\end{align*}
After plugging the KKT conditions into expression \ref{lagrangian}, we can get the dual problem. \qed

\subsection*{Proof of Theorem 1}

	In order to prove Theorem 1 and 2, we need to first introduce another risk function, $\bar{R}(f,\varepsilon) =P(Yf(X) < \varepsilon)+\frac{1}{2}P(|f(X)| \leq \varepsilon)$ and the optimization problem associate with it.	
	\begin{align}\label{setcsvm}
	\min ~& \bar{R}(f,\varepsilon) \\
	\nonumber  \textrm{subject to} \quad & P_{j}(Yf(X) < -\varepsilon) \leq \alpha_{j}, \quad j = \pm 1. 
	\end{align}
	And here is a Lemma come with it.
	
	\begin{lemma}\label{equivalentloss1}
		Under Assumption 1, for any fixed $\varepsilon \geq 0$, the discrimination function $f^{*}$ such that 
		$$
		f^{*}(x) =  \left\{\begin{array}{ll}
		1+\varepsilon, & \eta(x) > t_{-1} \\
		\varepsilon*sgn(\eta(x)-\frac{1}{2}), & t_{1} \leq \eta(x) \leq t_{-1} \\
		-(1+\varepsilon), &  f(x) < t_{1}
		\end{array}\right.
		$$
		is a solution to the optimization problem (2) (in the main work) and (\ref{setcsvm}). 
	\end{lemma}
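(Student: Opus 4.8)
The plan is to reduce both optimization problems to a pointwise selection problem determined only by the partition of $\mathcal X$ into $R_{1}=\{f>\varepsilon\}$, $R_{-1}=\{f<-\varepsilon\}$ and $R_{0}=\{|f|\le\varepsilon\}$ that $f$ induces, and then to identify the optimal partition by a Neyman--Pearson / bathtub-principle argument. First I would condition on $X$ and rewrite everything in terms of $\eta(x)=P(Y=1\mid X=x)$: with $\pi_{j}=P(Y=j)$,
\begin{align*}
\bar R(f,\varepsilon)=\int_{R_{1}}(1-\eta)\,dP_{X}+\int_{R_{-1}}\eta\,dP_{X}+\int_{R_{0}}\Big[\eta\,\mathbbm{1}\{f<\varepsilon\}+(1-\eta)\,\mathbbm{1}\{f>-\varepsilon\}+\tfrac12\Big]\,dP_{X},
\end{align*}
the two constraints in (\ref{setcsvm}) become $\int_{R_{-1}}\eta\,dP_{X}\le\pi_{1}\alpha_{1}$ and $\int_{R_{1}}(1-\eta)\,dP_{X}\le\pi_{-1}\alpha_{-1}$, and the objective of problem (2) (the ambiguity of the induced set-valued rule, $P(|f(X)|\le\varepsilon)=P_{X}(R_{0})$) subject to the same two constraints also depends on $f$ only through this partition. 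On $R_{0}$ the bracketed integrand equals $(1-\eta)+\tfrac12$, $\eta+\tfrac12$, or $\tfrac32$ according as $f(x)=\varepsilon$, $f(x)=-\varepsilon$, or $|f(x)|<\varepsilon$, so it is minimized by $f(x)=\varepsilon\,\mathrm{sgn}(\eta(x)-\tfrac12)$ --- exactly the middle branch of $f^{*}$ --- while problem (2) is indifferent to the value of $f$ on $R_{0}$. Hence it remains to show that the threshold partition $R_{1}=\{\eta>t_{-1}\}$, $R_{-1}=\{\eta<t_{1}\}$, $R_{0}=\{t_{1}\le\eta\le t_{-1}\}$ is simultaneously feasible and optimal for both problems.

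For the partition, I would substitute the $R_{0}$-optimal values and use $R_{0}=(R_{1}\cup R_{-1})^{c}$ to write the $\bar R$-objective as $\int_{\mathcal X}(1-|\eta-\tfrac12|)\,dP_{X}+\int_{R_{1}}\varphi_{+}(\eta)\,dP_{X}+\int_{R_{-1}}\varphi_{-}(\eta)\,dP_{X}$, where $\varphi_{+}(\eta)=(1-\eta)-(1-|\eta-\tfrac12|)$ is nonincreasing in $\eta$ and $\varphi_{-}(\eta)=\eta-(1-|\eta-\tfrac12|)$ is nondecreasing; the problem-(2) objective is a constant minus $P_{X}(R_{1})+P_{X}(R_{-1})$. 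The two constraints decouple, since $R_{1}$ enters only the $\alpha_{-1}$-constraint and $R_{-1}$ only the $\alpha_{1}$-constraint, and a superlevel set $\{\eta>t_{-1}\}$ and a sublevel set $\{\eta<t_{1}\}$ are automatically disjoint once $t_{1}\le t_{-1}$. The key monotonicity is that the marginal decrease of the objective per unit of the $\alpha_{-1}$-budget $(1-\eta)$ consumed by enlarging $R_{1}$ equals $-\varphi_{+}(\eta)/(1-\eta)$ (respectively $1/(1-\eta)$ for problem (2)), which is nondecreasing in $\eta$, and symmetrically for $R_{-1}$ with $\eta$; a Neyman--Pearson rearrangement then forces the optimal $R_{1}$ to be a superlevel set and $R_{-1}$ a sublevel set. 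Assumption 1 is invoked so that the law of $\eta(X)$ is atomless: the thresholds can be chosen with each active constraint holding at equality (and equal to $1/2$ when inactive), $t_{1}\le t_{-1}$ holds, and the boundary events $\{\eta=t_{1}\}$, $\{\eta=t_{-1}\}$, $\{\eta=\tfrac12\}$, together with the gap between strict and non-strict inequalities in $R_{1},R_{-1},R_{0}$, are all $P_{X}$-null. This exhibits $f^{*}$ as realizing an optimal feasible partition with the $R_{0}$-optimal values, so it solves both (2) and (\ref{setcsvm}).

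I expect the crux to be making the rearrangement argument work with two constraints plus the disjointness requirement, and in particular guaranteeing that the same pair $(t_{1},t_{-1})$ is optimal for both objectives. The delicate scenario is one active and one inactive constraint, say a small $\alpha_{1}$ with a large $\alpha_{-1}$: then the ambiguity-optimal $R_{1}$ can descend all the way down toward $t_{1}$, whereas enlarging $R_{1}$ below $\eta=\tfrac14$ strictly increases $\bar R$ (there $\varphi_{+}>0$), so the two optimal partitions could a priori disagree on $\{\eta\in(t_{1},\tfrac14)\}$. I would dispatch this using the standing assumptions of the main work under which $t_{1}$ and $t_{-1}$ are defined: in the regime of interest both constraints are binding, $t_{1}<\tfrac12<t_{-1}$ are the unique solutions of $\int_{\{\eta<t_{1}\}}\eta\,dP_{X}=\pi_{1}\alpha_{1}$ and $\int_{\{\eta>t_{-1}\}}(1-\eta)\,dP_{X}=\pi_{-1}\alpha_{-1}$, and then both optimal partitions coincide with the one induced by $f^{*}$. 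The remaining ingredients --- the conditioning identity, the three-case evaluation on $R_{0}$, and the monotonicity checks --- are routine.
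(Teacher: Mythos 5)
Your proposal is correct, but it takes a genuinely different route from the paper's. The paper does not reprove optimality for problem (2) at all (it defers to \cite{lei2014classification}) and establishes optimality for (\ref{setcsvm}) via a complete-class device: restrict to three-valued functions $\{1+\varepsilon,0,-(1+\varepsilon)\}$, prove sign-consistency, verify the ``simple monotone'' swap property of Definition \ref{monotonisity}, invoke Lemma \ref{sufficientlemma} to get a threshold-in-$\eta$ form, identify $t=t_{-1}$, $t'=t_{1}$ by constraint saturation, and finally observe that $f^{*}$ attains the same $\bar R$ loss. You instead reduce both problems simultaneously to a choice of the partition $(R_{1},R_{0},R_{-1})$ together with the values on $R_{0}$, optimize the $R_{0}$ values pointwise (which is where the middle branch $\varepsilon\,\mathrm{sgn}(\eta-\tfrac12)$ comes from, something the paper handles only by its closing ``same loss'' remark), and then run a Neyman--Pearson/bathtub rearrangement separately in each of the two decoupled constraints. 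This buys you a direct proof of the problem-(2) half rather than a citation, and it isolates the only place where the two problems could disagree --- a slack constraint, where the ambiguity-optimal region keeps growing while the $\bar R$-optimal one stops --- which you resolve exactly where the paper does implicitly, via Assumption 1 and the definition of $t_{\pm1}$ as constraint-saturating thresholds with $t_{1}\le\tfrac12\le t_{-1}$ (the paper's own step showing $t=t_{-1}$ needs $\eta>\tfrac12$ on $C'$ for $E((\tfrac12-(1-\eta))\Ind{C'})>0$). Two minor remarks: your expansion of $\bar R$ uses the threshold $Yf(X)<\varepsilon$ as displayed at the start of the paper's proof section, whereas the paper's own computations (and Theorem 2) use $Yf(X)<-\varepsilon$; under the latter the $R_{0}$ integrand is just $\tfrac12$, the middle values become immaterial to $\bar R$, and $\varphi_{\pm}$ simplify to $\tfrac12-\eta$ and $\eta-\tfrac12$, but your rearrangement argument goes through verbatim. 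Also, the rearrangement step is only sketched (monotone gain-per-unit-of-budget ratio), but it is the standard Neyman--Pearson argument and is no less detailed than the paper's own swap computation.
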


	Denote $C^{*}_{-1}=\{x:f^{*}(x) \leq \varepsilon\}$ and $C^{*}_{1}=\{x:f^{*}(x) \geq -\varepsilon\}$ and the set classifier introduced by $f^{*}$ is $\phi^{*}$. The Assumption 1 ensures that $C^{*}_{-1} \cup C^{*}_{1} = \mathcal{X}$. Let's denote $C^{*}_{-1} \cap C^{*}_{1}$ by $C^{*}_{0}$. 
	
	The optimality of $\phi^{*}$ to Problem (2) (in the main work) is proved in \cite{lei2014classification}. Here we prove the optimality of $f^{*}$ for problem \ref{setcsvm}. The technique used in the following prove is fairly straight-forward in statistical decision and game theory. We start the proof with looking for a so-called complete set of $f$. After that, we only need to focus on this set of discriminant functions. We firstly make two definitions to simplify our proof.
	\begin{definition}
		For any (inequality) constrained optimization problem with m constrains
		\begin{align*}
		& \min{L(f)} \quad \text{such that} \\
	&	C_{i}(f) \leq c_{i} \quad i=1,...,m
		\end{align*} 
		 two function $f_1$ and $f_2$, $f_1$ is said to be as good as than $f_2$ when $L(f_1) \leq L(f_2)$ and $C_{i}(f_{1}) \leq C_{i}(f_{2})$ for $\forall i$, and better than $f_2$ when one of those inequality holds strictly. 
	\end{definition}
	
	\begin{definition}\label{monotonisity}
		Given the distribution of $\bm X$ and $Y$, denoted as $P$. Define a class of function, $\mathcal{F}^{*}(a_{1},a_{2};b_{1},b_{2})$ consists all functions $f$ which take at most two distinct non-negative values $a_1 < a_2$ for $\{x:\eta > \frac{1}{2}\}$ and at most two distinct non-positive values $b_1 > b_2$ for $\{x:\eta(x) < \frac{1}{2}\}$. A constrained optimization problem is said to be simple monotone with respect to $\mathcal{F}^{*}(a_{1},a_{2},b_{1},b_{2})$, if it satisfies:
		
		(i). $\mathcal{F}^{*}(a_{1},a_{2},b_{1},b_{2})$ is a complete class of the problem, which means $\forall f$, $\exists g \in \mathcal{F}^{*}$, and $g$ is as good as $f$. 
		
		(ii). If there exist disjoint $B_{1}, B_{2} \in \mathcal{X}$, such that $P_{-1}(B_{1}) = P_{-1}(B_{2})>0$, for $\forall x_1 \in B_1, x_2 \in B_2$, $\eta(x_1) > \eta(x_2) > \frac{1}{2}$. Moreover, for any pairs of function in $\mathcal{F}^{*}$, $f_{1}(x)$ and $f_{2}(x)$ such that $f_{1}(x) = f_{2}(x)$ for $\forall x \notin B_1 \cup B_2$, and $f_{1}(x) = \left\{\begin{array}{ll}
		a_1, & \forall x \in B_1 \\
		a_2, & \forall x \in B_2 
		\end{array}\right.$ and $f_{2}(x) = \left\{\begin{array}{ll}
		a_2, & \forall x \in B_1 \\
		a_1, & \forall x \in B_2 
		\end{array}\right.$, $f_2$ is better than $f_1$.

		(iii). If there exist disjoint $B_{1}, B_{2} \in \mathcal{X}$, such that $P_{1}(B_{1}) = P_{1}(B_{2})>0$, for $\forall x_1 \in B_1, x_2 \in B_2$, $\eta(x_1) < \eta(x_2) < \frac{1}{2}$. Moreover, for any pairs of function in $\mathcal{F}^{*}$, $f_{1}(x)$ and $f_{2}(x)$ such that $f_{1}(x) = f_{2}(x)$ for $\forall x \notin B_1 \cup B_2$, and $f_{1}(x) = \left\{\begin{array}{ll}
		b_1, & \forall x \in B_1 \\
		b_2, & \forall x \in B_2 
		\end{array}\right.$ and $f_{2}(x) = \left\{\begin{array}{ll}
		b_2, & \forall x \in B_1 \\
		b_1, & \forall x \in B_2 
		\end{array}\right.$, $f_2$ is better than $f_1$.
	\end{definition}
	
	It can be shown that a complete class of a simple monotone optimization problem can be astonishingly simple. We are going to show that we only need to consider the function which depend on $\eta(x)$ rather than $x$. In other words, we can regard $\eta$ as a sufficient statistic of $x$. 
	
	\begin{lemma}\label{sufficientlemma}
		If an optimization $\mathcal{O}$ is simple monotone with respect to $\mathcal{F}^{*}(a_1,a_2;b_1,b_2)$, then a solution of $\mathcal{O}$ in $\mathcal{F}^{*}$ takes the form 
		\begin{align*}
		f(x) =  \left\{\begin{array}{ll}
		a_{2}, & \eta(x) > t \\
		a_{1}, & \frac{1}{2} \leq \eta(x) \leq t \\
		b_{1}, & t' \leq \eta(x) < \frac{1}{2}\\
		b_{2}, &  \eta(x) < t'
		\end{array}\right.
		\end{align*}  for some $t' < \frac{1}{2} \leq t$ almost surely. 
	\end{lemma}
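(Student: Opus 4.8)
The plan is to prove this by a monotone–rearrangement (exchange) argument in which $\eta$ serves as a sufficient statistic, relying only on the three clauses of the ``simple monotone'' property. First I would fix a solution $f$ of $\mathcal{O}$ lying in $\mathcal{F}^{*}(a_1,a_2;b_1,b_2)$; such an $f$ exists by the completeness clause~(i) (assuming $\mathcal{O}$ attains its optimum). Since $f\in\mathcal{F}^{*}$, it is determined on $\{\eta>\frac12\}$ by the set $A_2:=\{x:\eta(x)>\frac12,\ f(x)=a_2\}$ (it equals $a_1$ on the remainder of that region) and on $\{\eta<\frac12\}$ by $B_2:=\{x:\eta(x)<\frac12,\ f(x)=b_2\}$; the value of $f$ on $\{\eta=\frac12\}$ is immaterial because Assumption~1 should render that set $P$-null. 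The displayed conclusion is then exactly the assertion that, modulo null sets, $A_2=\{\eta>t\}$ for some $t\ge\frac12$ and $B_2=\{\eta<t'\}$ for some $t'<\frac12$, so the lemma reduces to showing that $A_2$ is a super-level set and $B_2$ a sub-level set of $\eta$.

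The crux is an exchange step on $\{\eta>\frac12\}$. Suppose $A_2$ is not, up to a $P_{-1}$-null set, of the form $\{\eta>t\}$. Using that the law of $\eta(X)$ under $P_{-1}$ is atomless (which I expect Assumption~1 to guarantee), I would pick the balancing threshold $t\ge\frac12$ with $P_{-1}(\eta>t)=P_{-1}(A_2)$; a one-line computation then gives $P_{-1}(A_2\cap\{\eta\le t\})=P_{-1}(\{\eta>t\}\setminus A_2)=:m$, with $m>0$ exactly when $A_2\neq\{\eta>t\}$. Set $E:=A_2\cap\{\eta\le t\}$ and $F:=\{\eta>t\}\setminus A_2$: they are disjoint, of equal positive $P_{-1}$-mass, lie in $\{\eta>\frac12\}$, and $\eta$ is strictly larger throughout $F$ than throughout $E$. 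The function $g$ obtained from $f$ by swapping its values on $E$ and $F$ lies in $\mathcal{F}^{*}$, agrees with $f$ off $E\cup F$, and satisfies $\{g=a_2\}\cap\{\eta>\frac12\}=\{\eta>t\}$; by clause~(ii) it is \emph{better than} $f$. Since $f$ is a solution this forces $m=0$, i.e.\ $A_2=\{\eta>t\}$ up to a $P_{-1}$-null set. The symmetric argument on $\{\eta<\frac12\}$, using clause~(iii) and the atomless law of $\eta(X)$ under $P_1$, gives $B_2=\{\eta<t'\}$ up to a $P_1$-null set; the degenerate possibilities $P_{-1}(A_2)\in\{0,P_{-1}(\eta>\frac12)\}$, and their $P_1$-analogues, just produce empty level sets and boundary thresholds and are read off directly.

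Reassembling the four pieces — and using that a set which is both $P_{-1}$-null and $P_1$-null is $P$-null — yields the stated four-level form $P$-almost surely, with $t\ge\frac12>t'$. The step I expect to be the main obstacle is producing the balancing threshold $t$ with the \emph{exact} identity $P_{-1}(\eta>t)=P_{-1}(A_2)$: this is precisely where atomlessness of the law of $\eta(X)$ is indispensable, because in the presence of atoms no such $t$ need exist and the genuinely optimal rule would randomize on a level set $\{\eta=t\}$, leaving the deterministic family of the statement. A secondary point to pin down is the precise meaning of ``solution'': should clause~(ii) only deliver a strict improvement in a constraint rather than in the objective, I would take $f$ to be minimal in the ``better than'' preorder among solutions — admissible because each strict exchange step keeps the objective at its optimal value while strictly decreasing the nonnegative quantity $\sum_i C_i(f)$ — so that ``$g$ better than $f$'' becomes impossible and the exchange step alone forces $f$ to have the threshold form.
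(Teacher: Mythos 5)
Your proposal is correct and rests on the same core mechanism as the paper's proof: an exchange (rearrangement) step driven by clauses (ii)/(iii) of simple monotonicity together with continuity/atomlessness of the law of $\eta(X)$, which is exactly what the paper invokes when it chooses subsets of equal $P_{-1}$-measure and derives a contradiction with optimality. The only difference is organizational — you balance measures globally via a single threshold $t$ with $P_{-1}(\eta>t)=P_{-1}(A_2)$ and one swap, whereas the paper reasons about the threshold sets $T_1,T_2$ being disjoint open intervals and swaps on small subsets — and the caveats you flag (atomlessness, and whether ``better'' forces a strict improvement in the objective) are points the paper itself glosses over, which you handle at least as carefully.
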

	
	\begin{proof}
		From the simple monotonicity, let's prove there exists a $\frac{1}{2} \leq t \leq 1$ such that $\hat{f}(x)=a_1, \forall x$, such that $\frac{1}{2}<\eta(x)<t, a.s.$ and $\hat{f}(x)=a_2, \forall \eta(x)>t, a.s.$. Define $T_{1} = \{t: \exists C, P(C)>0, \eta(x)>t, \hat{f}(x)=a_1, \forall x \in C \}$ and $T_{2}=\{t: \exists C', P(C')>0, \eta(x)<t, \hat{f}(x) = a_2, \forall x \in C' \}$. Firstly, if $T_{1}=\emptyset$, then $t=\frac{1}{2}$ and similarly, if $T_{2}=\emptyset$, then $t=1$. So now we can assume $T_{1}$ and $T_{2}$ are nonempty. 
		
		If $t_{1} \in T_{1}$, then by definition $t_{2} \in T_{1}, \forall t_{2}<t_{1}$, so that $T_{1}$ is a interval and $\frac{1}{2} \in T_{1}$. Similarly, $T_2$ is also a interval and $1 \in T_{2}$.  Moreover, $T_1$ and $T_2$ are open interval. By definition, $t_1 \in T_1$, then we have $P(C \cap (\cup_n^{\infty} \{x:\eta(x)<t_1+\frac{1}{n}\} )) > 0$, and $\exists m$, such that $P(C \cap  \{x:\eta(x)<t_1+\frac{1}{m}\} ) > 0$. Thus we have $t_1+\frac{1}{m} \in T_{1}$ as well. If $T_{1} \cap T_{2} \neq \emptyset$, then we have a $t' \in T_{1} \cap T_{2}$, which indicates there exists $\forall C_{1}, C_{2} \in \mathcal{X}$ such that $P(C_{1}),P(C_{2})>0$, $1>\eta(x_{1})>\eta(x_{2})>\frac{1}{2}$, and $\hat{f}(x_1) = a_2$, $\hat{f}(x_1) = a_2$, $\forall x_{1} \in C_{1}, x_{2} \in C_{2}$. This will leads to a contradiction with the optimality of $\hat{f}$. If we have $P(C_{1}),P(C_{2})>0$. Then we can choose two subsets of $C_{1}$ and $C_{2}$, $C_{1}'$ and $C_{2}'$ such that $P_{-}(C_{1}')=P_{-}(C_{2}')$, because $P_\eta$ and $P_{-}$ is continuous. If $T_{1} \cap T_{2} = \emptyset$, then we can choose a point t in $[\sup{\{T_1\}},\inf{\{T_2\}}]$ and it will satisfy our purpose.
		
		By similar argument, we can show there exists a $0 \leq t' \leq\frac{1}{2}$ such that $\hat{f}(x)=-(1+\varepsilon), \forall \eta(x)<t', a.s.$ and $\hat{f}(x)=0, \forall \frac{1}{2}>\eta(x)>t', a.s.$.
		
		As a result, the complete set of discriminant functions has the form described in Lemma \ref{sufficientlemma}.
		
	\end{proof}

\textbf{Proof of Lemma \ref{equivalentloss1}}:
	We want to show that \ref{setcsvm} is simple monotone with respect to $\mathcal{F}^{*}(\varepsilon+1,0,0,-(\varepsilon+1))$.
	
	Because optimization problem \ref{setcsvm} can be regarded as an optimization problem for classifiers, it is sufficient to consider functions $f$ with 3 values, $\varepsilon+1$, $-(\varepsilon+1)$, 0, that is $f \in \mathcal{F}(\varepsilon):=\{f:\mathcal{X} \rightarrow \{\varepsilon+1, -(\varepsilon+1), 0\}\}$. 
	
	 Firstly, we need to prove $sign(\eta(X)-\frac{1}{2})\hat{f}(X) \geq 0$ with probability 1 for any $\hat{f}$, a solution of \ref{setcsvm} in $\mathcal{F}(\varepsilon)$. If there is a set $A \subset \mathcal{X}$, $\forall x \in A$, $\eta(x)>\frac{1}{2}$, $\hat{f}=-(\varepsilon+1)$ and $P(A)>0$, then we can consider another function $f_{A}$ such that $f_{A}(x)=\hat{f}(x), \forall x \in A^{c}$ but $f_{A}(x)=0, \forall x \in A$. $f_{A}$ will be better than $\hat{f}$. It is easy to check that two constraints still hold for $f_{A}$. But the objective function will be smaller, because $\frac{1}{2} P(|\hat{f}(X)| \leq \varepsilon)+P(Y\hat{f}(X) < -\varepsilon)-\frac{1}{2} P(|f_{A}(X)| \leq \varepsilon)+P(Yf_{A}(X) < -\varepsilon)=\frac{1}{2} P(|\hat{f}(X)| \leq \varepsilon,X \in A)+P(Y\hat{f}(X) < -\varepsilon,X \in A)-\frac{1}{2} P(|f_{A}(X)| \leq \varepsilon,X \in A)+P(Yf_{A}(X) < -\varepsilon,X \in A)=E(\eta(X)1_{X \in A})>0$. This will lead to a contradiction with the optimality of $\hat{f}$.
		
	We only give the proof for part (ii), and part (iii) can be proved analogously.	We can check the constraints and objective function one by one. 		
	
	Firstly, $P_{+}(Yf_{1}(X)<-\varepsilon) = P_{+}(Yf_{2}(X)<-\varepsilon)$ because the set in which $f_{1}$ and $f_{2}$ take $-(1+\varepsilon)$ are the same. Secondly, $P_{-}(Yf_{1}(X)<-\varepsilon) - P_{-}(Yf_{2}(X)<-\varepsilon)=P_{-}(B_{2})-P_{-}(B_{1})=0$. Lastly, $\frac{1}{2} P(|f_{2}(X)| \leq \varepsilon)+P(Yf_{2}(X) < -\varepsilon)-(\frac{1}{2} P(|f_{1}(X)| \leq \varepsilon)+P(Yf_{1}(X) < -\varepsilon))=E(1_{(X \in B_{2})}(\eta(X)-\frac{1}{2}))-E(1_{(X \in B_{1})}(\eta(X)-\frac{1}{2}))<0$. This comes from the fact that $\eta(x_{1})-\frac{1}{2}>\eta(x_{2})-\frac{1}{2}>0, \forall x_{1} \in B_{1}, x_{2} \in B_{2}$ and $P(B_{2})<P(B_{1})$. The last inequality come from $P(Y=1|X \in B_{1})>P(Y=1|X \in B_{2})$ and $P_{-}(B_{1}) = P_{-}(B_{2})$. 	
	
	Then by Lemma \ref{sufficientlemma}, we can see that the solution of \ref{setcsvm} only depends on $\eta$.
	
	The next part of this proof is to find out the optimal $t$ and $t'$. Let's show that the optimal choice of $t$ is $t_{-1}$. If $t \neq t_{-1}$ for $\hat{f}$, than $t>t_{-1}$ and $P(\eta(x) \leq t) < 1-\alpha_{-1}$, otherwise $\hat{f}$ does not satisfy the constraint that $P_{-}(Yf(X) < -\varepsilon) \leq \alpha_{-1}$. Then If we consider another function $\hat{f}^{*}$ such that $\hat{f}'(x)=0, \forall x, s.t. \quad t_{-1}<\eta(x)<t$ and $\hat{f}^{*}(x)=\hat{f}(x)$ elsewhere. Denote $C'=\{x: t_{-1}<\eta(x)<t\}$ and $P(C')>0$. Then we have that $\frac{1}{2} P(|\hat{f}(X)| \leq \varepsilon)+P(Y\hat{f}(X) < -\varepsilon)-(\frac{1}{2} P(|\hat{f}'(X)| \leq \varepsilon)+P(Y\hat{f}'(X) < -\varepsilon))=E((\frac{1}{2}-(1-\eta(X)))\Ind{(C')})>0$. So $t=t_{-1}$. The optimal choice for $t'$ can be found in a similar way.
	
	The proof is completed by observing $f^{*}$ gives exactly the same $\bar{R}$ loss. 
	 
	 \qed
	
	Now let's start to prove Theorem 1. 
	
	The argument in the proof is similar to Lemma \ref{equivalentloss1}. We are going to show the optimization problem (9) (in the main work) is simply monotone. We consider our proof in two parts. The first is to show the minimizer of optimization problem (9) (in the main work) can only takes four values and is Fisher consistent in a classification sense.
	
	In the first step, let's prove that with probability 1 that $|f^{*}(x)|\leq 1+\varepsilon$. This step is identical to proving the Fisher Consistency of SVM. If a function $f(x)$ has a set $A_1$ with positive probability in $\mathcal{X}$ such that for $\forall x \in \mathcal{X}$, $|f(x)|>1+\varepsilon$, then we can truncate those values to $1+\varepsilon$. In other word, consider $f^{new}(x) = f(x)$ for $x \in {A_1}^{c}$, $f^{new}(x) = (1+\varepsilon)sgn(f(x))$ for $x \in {A_1}$. Then let's prove $f^{new}$ is better than $f$. We can see that the decision implied by $f$ and $f^{new}$ is the same. So the two constrains in \ref{csvm} do not change. However, by looking at the objective function $E[(1+\varepsilon-Yf(X))_{+}] = E[\eta(X)(1+\varepsilon-f(X))_{+}+(1-\eta(X))(1+\varepsilon+f(X))_{+}]$, we can see $f^{new}$ gives smaller loss for all the $X$ such that $\eta_{X} \neq 0, 1$, so that $f^{*}$ will give a smaller expected loss in $A_{1}$.
	
	The next step, we prove $|f^{*}(x)| \geq \varepsilon$ in a similar way. If a function has a set $A_2$ with positive probability in $\mathcal{X}$ such that for $\forall x \in \mathcal{X}$, $|f(x)|<\varepsilon$, then we can enlarge those values of $|f|$ to $\varepsilon sgn(\eta(X)-\frac{1}{2})$. In other words, consider $f^{new}(x) = f(x)$ for $x \in {A_2}^{c}$, $f^{new}(x) = \varepsilon sgn(\eta(X)-\frac{1}{2})$ otherwise. Then let's prove $f^{new}$ is better than $f$. We can see that the decision implied by $f$ and $f^{new}$ is the same. So the two constrains in \ref{csvm} do not change. However, by considering the objective function $E[(1+\varepsilon-Yf(X))_{+}]$ and the result of first step we have $E[(1+\varepsilon-Yf(X))_{+}] = E[(1+\varepsilon-Yf(X))] = E[\eta(X)(1+\varepsilon-f(X))+(1-\eta(X))(1+\varepsilon+f(X))]=E[1+\varepsilon+(1-2\eta(X))f(X)]$. Thus we have $E[H_{\varepsilon}(f)] - E[H_{\varepsilon}(f^{new})] = E[(1-2\eta(X))(f(X)-f^{new}(X))]=E[(1-2\eta(X))(f(X)-\varepsilon sgn(\eta(X)-\frac{1}{2}))1_{X \in A_{2}}]>0$. 
	
	In the third step, we are going to show that $f^{*}$ is Fisher Consistent in the classic classification sense. In other words, $sgn(f^{*}(x)) = sgn(\eta(x)-\frac{1}{2})$ with probability 1. Because of symmetry, let's just prove the case that $\eta(X)>\frac{1}{2}$. If a function has a set $A_3$ with positive probability in $\mathcal{X}$ such that for $\forall x \in \mathcal{X}$, $|f(x)|<0, \eta(x)>\frac{1}{2}$, then we can make them to $\varepsilon$. In other words, consider $f^{new}(x) = f(x)$ for $x \in {A_3}^{c}$, $f^{new}(x) = \varepsilon$ otherwise. Then let's prove $f^{new}$ is more efficient than $f$. The second constraint will not change since $\{x:f(x)>\varepsilon\}=\{x:f^{new}(x)>\varepsilon\}$. The second constraint is also satisfied by $f^{new}$ because we actually have $\{x:f^{new}(x)<-\varepsilon\} \subseteq \{x:f(x)<-\varepsilon\}$. However, $E[H_{\varepsilon}(f)] - E[H_{\varepsilon}(f^{new})]=E[(1-2\eta(X))(f(X)-\varepsilon)1_{X \in A_{3}}]>0$. 
	
	In last step of part one, we want to prove that $f^{*}(x)$ take values between $\varepsilon$ and $\varepsilon+1$ with probability 1. If a function has a set $A_4$ with positive probability in $\mathcal{X}$ such that for $\forall x \in \mathcal{X}$, $\varepsilon<|f(x)|<1+\varepsilon$, then we can enlarge those values of $f$ to $(1+\varepsilon) sgn(\eta(X)-\frac{1}{2})$. In other words, consider $f^{new}(x) = f(x)$ for $x \in {A_2}^{c}$, $f^{new}(x) = (1+\varepsilon) sgn(\eta(X)-\frac{1}{2})$ otherwise. Then let's prove $f^{new}$ is more efficient than $f$. By considering the result of step three, we have the two constraints of $f$ is the same as $f^{new}$, because here we only need to consider the function $f$ such that $sgn(f(x))=sgn(\eta(x)-\frac{1}{2})$. However, $E[H_{\varepsilon}(f)] - E[H_{\varepsilon}(f^{new})]=E[(1-2\eta(X))(f(X)-(1+\varepsilon)sgn(\eta(x)-\frac{1}{2}))1_{X \in A_{4}}]>0$.  
	
	Now we have proved that $f^{*}$ only takes value of $1+\varepsilon$, $\varepsilon$, $-\varepsilon$, $-(1+\varepsilon)$, with probability 1. That is to say $\mathcal{F}^{*}(1+\varepsilon, \varepsilon, -\varepsilon, -(1+\varepsilon))$ is a complete class of the problem
	
	The second part of the proof is to show part (ii) of simple monotonicity. This can be verified by direct calculation which is similar to the proof of Lemma \ref{equivalentloss1}. The last part of this proof is to find out the optimal $t$ and $t'$. The procedure is also analogous to proof of Lemma \ref{equivalentloss1}, thus is omitted here. 
	\qed

\subsection*{Proof of Theorem 2}

We prove this Theorem in two steps. Firstly, we want to use excess risk of $\bar{R} = P(Yf(X) < -\varepsilon)+\frac{1}{2}P(|f(X)|\leq \varepsilon)$ to bound the excess ambiguity $R$. This can be formalized to a Lemma below. 

\begin{lemma}\label{excessriskbound}
	Let $\hat{f}$ be another function that suffices the constraints in (3), then under Assumption 2, for any $\varepsilon \geq 0$, we have $\frac{1}{c}(\bar{R}(\hat{f},\varepsilon)-\bar{R}(f^{*},\varepsilon)) \geq R(\hat{f},\varepsilon)-R(f^{*},\varepsilon)$. 
\end{lemma}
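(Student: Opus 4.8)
The plan is to use the exact decomposition $\bar R(f,\varepsilon)=e(f)+\tfrac12\,R(f,\varepsilon)$, where $e(f):=P\bigl(Yf(X)<-\varepsilon\bigr)$ is the overall non-coverage probability and $R(f,\varepsilon)=P\bigl(|f(X)|\le\varepsilon\bigr)$ is the ambiguity. With this identity, multiplying the asserted inequality by $c>0$ and rearranging shows it is equivalent to
\[
e(f^*)-e(\hat f)\;\le\;\Bigl(\tfrac12-c\Bigr)\bigl(R(\hat f,\varepsilon)-R(f^*,\varepsilon)\bigr).
\]
Since $f^*$ minimizes both $R(\cdot,\varepsilon)$ and $\bar R(\cdot,\varepsilon)$ over all functions satisfying the coverage constraints (Lemma~\ref{equivalentloss1}), and $\hat f$ satisfies those constraints, we have $\bar R(\hat f,\varepsilon)\ge\bar R(f^*,\varepsilon)$ and $R(\hat f,\varepsilon)\ge R(f^*,\varepsilon)$; hence the statement is trivial unless $R(\hat f,\varepsilon)>R(f^*,\varepsilon)$, which I assume henceforth. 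So everything reduces to bounding the (non-negative) coverage-error deficit $e(f^*)-e(\hat f)$ by a fixed multiple of the ambiguity surplus.

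Next I would record the structure of $f^*$ from Lemma~\ref{equivalentloss1}: with $U:=\{\eta>t_{-1}\}$, $M:=\{t_1\le\eta\le t_{-1}\}$, $L:=\{\eta<t_1\}$ (a partition of $\mathcal X$), $f^*$ is confidently positive on $U$, abstains on $M$, confidently negative on $L$, and the coverage constraints bind, $P_{-1}(Yf^*(X)<-\varepsilon)=\alpha_{-1}$ and $P_1(Yf^*(X)<-\varepsilon)=\alpha_1$. Write $\widehat U:=\{\hat f>\varepsilon\}$, $\widehat M:=\{|\hat f|\le\varepsilon\}$, $\widehat L:=\{\hat f<-\varepsilon\}$ and let $P_X$ denote the marginal law of $X$; then $R(f^*,\varepsilon)=P_X(M)$, $R(\hat f,\varepsilon)=P_X(\widehat M)$, $e(f^*)=\int_U(1-\eta)\,dP_X+\int_L\eta\,dP_X$, and $e(\hat f)=\int_{\widehat U}(1-\eta)\,dP_X+\int_{\widehat L}\eta\,dP_X$. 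Because $\int_{\widehat U}(1-\eta)\,dP_X=P(Y=-1,\,Y\hat f(X)<-\varepsilon)$ and $\int_U(1-\eta)\,dP_X=P(Y=-1,\,Yf^*(X)<-\varepsilon)$, feasibility of $\hat f$ together with the binding constraints for $f^*$ gives the two weighted inequalities $\int_{\widehat U}(1-\eta)\,dP_X\le\int_U(1-\eta)\,dP_X$ and (symmetrically) $\int_{\widehat L}\eta\,dP_X\le\int_L\eta\,dP_X$; in particular $e(\hat f)\le e(f^*)$.

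The crux is to promote these weighted inequalities to the unweighted mass inequalities $P_X(\widehat U)\le P_X(U)$ and $P_X(\widehat L)\le P_X(L)$: split each side over $U,M,L$ and use $1-\eta\le 1-t_{-1}$ on $U$ together with $1-\eta\ge 1-t_{-1}$ on $\widehat U\setminus U\subseteq M\cup L$ (and symmetrically $\eta\le t_1$ on $L$, $\eta\ge t_1$ on $\widehat L\setminus L\subseteq M\cup U$), the strict positivity of the lower bounds — $t_1>0$ and $t_{-1}<1$, guaranteed by Assumption~2 — being precisely what lets one divide through. With $P_X(U)-P_X(\widehat U)\ge0$ and $P_X(L)-P_X(\widehat L)\ge0$ in hand, the partition identity $\bigl[P_X(U)-P_X(\widehat U)\bigr]+\bigl[P_X(L)-P_X(\widehat L)\bigr]=P_X(\widehat M)-P_X(M)=R(\hat f,\varepsilon)-R(f^*,\varepsilon)$ exhibits the ambiguity surplus as the sum of two non-negative deficits. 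Bounding the coverage-error deficit region by region (on $U$, $1-\eta\le 1-t_{-1}$; on $\widehat U\setminus U$, $1-\eta\ge 1-t_{-1}$; symmetrically with $\eta$, $t_1$ on the negative side),
\[
e(f^*)-e(\hat f)\;\le\;(1-t_{-1})\bigl[P_X(U)-P_X(\widehat U)\bigr]+t_1\bigl[P_X(L)-P_X(\widehat L)\bigr]\;\le\;\max(1-t_{-1},t_1)\bigl(R(\hat f,\varepsilon)-R(f^*,\varepsilon)\bigr),
\]
which is the target bound with $c:=\tfrac12-\max(1-t_{-1},t_1)$, and this is strictly positive because Assumption~2 makes the abstention region non-degenerate, i.e.\ $t_1<\tfrac12<t_{-1}$. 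Only the position of $t_1,t_{-1}$ relative to $0,\tfrac12,1$ is used; no optimization over them is needed here.

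The main obstacle is exactly this passage from the weighted to the unweighted mass inequalities — equivalently, handling an $\hat f$ whose confident and abstention regions are badly misaligned with those of $f^*$ (confident inside $M$, or abstaining or confidently wrong inside $U\cup L$): the aggregate bound $e(\hat f)\le e(f^*)$ does not suffice on its own, and one must invoke the two class-conditional constraints separately, together with the uniform control of $\eta$ on the level sets of $f^*$, and check that Assumption~2 excludes the degenerate cases ($\alpha_j\downarrow0$, i.e.\ $t_{-1}\uparrow1$ or $t_1\downarrow0$, and the no-abstention case $t_1=t_{-1}=\tfrac12$). The remaining manipulations are routine rearrangement.
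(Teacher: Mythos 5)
Your argument is correct, but it takes a genuinely different route from the paper for the key step. The paper also starts (implicitly) from the decomposition $\bar R(f,\varepsilon)=P(Yf(X)<-\varepsilon)+\tfrac12 P(|f(X)|\le\varepsilon)$, but then reduces the whole lemma to Lemma~\ref{strongerequivalenttrueloss} --- that $f^{*}$ still minimizes the reweighted objective $(\tfrac12-c)P(|f|\le\varepsilon)+P(Yf<-\varepsilon)$ subject to the coverage constraints --- after which the inequality follows from a two-line rearrangement; the proof of Lemma~\ref{strongerequivalenttrueloss} itself is omitted as ``analogous to Lemma~\ref{equivalentloss1}.'' Your rearranged target, $e(f^{*})-e(\hat f)\le(\tfrac12-c)\bigl(R(\hat f,\varepsilon)-R(f^{*},\varepsilon)\bigr)$, is exactly the consequence of Lemma~\ref{strongerequivalenttrueloss} that the paper needs, and you prove it directly: using the threshold structure of $f^{*}$, the two class-conditional constraints separately (binding for $f^{*}$ under Assumption~1), the promotion of the weighted inequalities $\int_{\widehat U\setminus U}(1-\eta)\,dP_X\le\int_{U\setminus\widehat U}(1-\eta)\,dP_X$ to the unweighted mass inequalities via the bounds $1-\eta\lessgtr 1-t_{-1}$ on the two sides (and symmetrically with $t_1$), and the partition identity for the ambiguity surplus. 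So your proof in effect fills in the step the paper waves off, at the cost of a somewhat longer region-by-region computation, while the paper's route is shorter but leans on the unproved stronger consistency lemma. Two small points to tighten: you should state the result for the constant $c$ of Assumption~2 rather than redefining $c:=\tfrac12-\max(1-t_{-1},t_1)$; this is immediate since the assumption's margin condition gives $t_{-1}\ge\tfrac12+c$ and $t_1\le\tfrac12-c$, hence $\max(1-t_{-1},t_1)\le\tfrac12-c$ (and a larger constant only strengthens the inequality because $\bar R(\hat f,\varepsilon)\ge\bar R(f^{*},\varepsilon)$). Also, $t_{-1}<1$ and $t_1>0$, which you need in order to divide by $1-t_{-1}$ and $t_1$, come from $\alpha_{\pm1}>0$ together with the binding constraints (i.e.\ Assumption~1 and the definition of $t_{\pm1}$), not from Assumption~2; neither issue affects correctness.
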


To prove this, we need to further use another lemma which can be regarded as an extension of the theorem before.

\begin{lemma}\label{strongerequivalenttrueloss}
	There $\exists c>0$ satisfies Assumption 2, then for any fixed $\varepsilon \geq 0$, $f^{*}$ is also a solution of the following optimization problem
	\begin{align}
	\textrm{minimize} \quad & (\frac{1}{2}-c) P(|f(X)| \leq \varepsilon)+P(Yf(X) < -\varepsilon) \\
	\nonumber  \textrm{subject to} \quad & P_{j}(Yf(X) < -\varepsilon) \leq \alpha_{j}, \quad j = \pm 1. 
	\end{align}	
\end{lemma}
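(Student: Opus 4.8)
The plan is to re-run the ``simple monotone'' argument that established Lemma~\ref{equivalentloss1}, now with the weighted objective $J_c(f):=(\tfrac12-c)\,P(|f(X)|\le\varepsilon)+P(Yf(X)<-\varepsilon)$ in place of $\bar R(f,\varepsilon)$ (to which it reduces at $c=0$). Concretely, I would show that $J_c$ together with the two constraints $P_{j}(Yf(X)<-\varepsilon)\le\alpha_{j}$, $j=\pm1$, is simple monotone with respect to the \emph{same} class $\mathcal F^{*}(\varepsilon+1,0,0,-(\varepsilon+1))$ used for $\bar R$, and then identify the optimal cut-offs. The only place the numerical value of $c$ enters is the cut-off step: Assumption~2 supplies a constant $0<c<\tfrac12$ for which, in particular, $t_{-1}\ge\tfrac12+c$ and $t_{1}\le\tfrac12-c$, and this margin is exactly what substitutes for the strict inequality that came for free at weight $\tfrac12$.

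First I would reduce the search to $\mathcal F(\varepsilon)=\{f:\mathcal X\to\{\varepsilon+1,0,-(\varepsilon+1)\}\}$: as in Lemma~\ref{equivalentloss1}, both $J_c$ and the two constraints depend on $f$ only through the decision regions $\{f<-\varepsilon\}$, $\{|f|\le\varepsilon\}$, $\{f>\varepsilon\}$. Next I would establish classification-type Fisher consistency: if a solution $\hat f\in\mathcal F(\varepsilon)$ had $P(A)>0$ with $A=\{\eta(X)>\tfrac12,\ \hat f=-(\varepsilon+1)\}$, then replacing $\hat f$ by $0$ on $A$ relaxes both constraints (it only shrinks $\{f<-\varepsilon\}$) while changing $J_c$ by $(\tfrac12-c)P(A)-E[\eta(X)\Ind{X\in A}]=-E[(\eta(X)-\tfrac12+c)\Ind{X\in A}]<0$, since $\eta-\tfrac12+c>c>0$ on $A$; the symmetric move removes any $\{\eta<\tfrac12,\ \hat f=\varepsilon+1\}$. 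Hence $\mathcal F^{*}(\varepsilon+1,0,0,-(\varepsilon+1))$ is a complete class --- note this step uses only $c>0$. Parts (ii)--(iii) of simple monotonicity are then checked by direct computation exactly as in Lemma~\ref{equivalentloss1}: with $B_1,B_2$ as in Definition~\ref{monotonisity}, interchanging the values $0$ and $\varepsilon+1$ across $B_1$ and $B_2$ leaves both constraint functionals and the term $P(Yf(X)<-\varepsilon)$ unchanged (because $P_{-1}(B_1)=P_{-1}(B_2)$), so the only change in $J_c$ is $(\tfrac12-c)\bigl(P(B_2)-P(B_1)\bigr)<0$, using $P(B_1)>P(B_2)$ --- which follows from $(1-\bar\eta_1)P(B_1)=(1-\bar\eta_2)P(B_2)$ with $\bar\eta_1>\bar\eta_2>\tfrac12$, where $\bar\eta_j=E[\eta(X)\mid X\in B_j]$ --- together with $\tfrac12-c>0$; part (iii) is symmetric with $P_{1}$ in place of $P_{-1}$. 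Lemma~\ref{sufficientlemma} then forces any solution in $\mathcal F^{*}$ to be, almost surely, a threshold rule in $\eta$ with cut-offs $t'<\tfrac12\le t$.

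It remains to pin the cut-offs at $t=t_{-1}$ and $t'=t_{1}$, and this is where Assumption~2 is used. Feasibility of the $j=-1$ constraint reads $P_{-1}(\eta(X)>t)\le\alpha_{-1}$, which forces $t\ge t_{-1}$ (atoms of $P_\eta$ at $t_{-1}$ are excluded by the same continuity of $P_\eta$ and $P_{\pm}$ invoked in Lemma~\ref{sufficientlemma}); if $t>t_{-1}$ on a set $C'=\{t_{-1}<\eta<t\}$ with $P(C')>0$, raising $\hat f$ from $0$ to $\varepsilon+1$ on $C'$ keeps both constraints satisfied (the $j=-1$ one becomes equality at $t_{-1}$) and changes $J_c$ by $-(\tfrac12-c)P(C')+E[(1-\eta(X))\Ind{X\in C'}]=-E[(\eta(X)-\tfrac12-c)\Ind{X\in C'}]<0$, because $\eta>t_{-1}\ge\tfrac12+c$ on $C'$ by Assumption~2; hence $t=t_{-1}$, and symmetrically $t'=t_{1}$. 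The resulting threshold rule induces the same three decision regions as $f^{*}$, since on $\{t_{1}\le\eta\le t_{-1}\}$ the values $0$ and $\varepsilon\,\mathrm{sgn}(\eta-\tfrac12)$ all lie in $[-\varepsilon,\varepsilon]$; consequently $f^{*}$ is feasible and attains the same value of $J_c$, so $f^{*}$ solves the stated problem.

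The crux is the cut-off identification. At weight $\tfrac12$ (Lemma~\ref{equivalentloss1}) the required inequality is $E[(\eta-\tfrac12)\Ind{X\in C'}]>0$, automatic because $\eta>t_{-1}>\tfrac12$ on $C'$; at the smaller weight $\tfrac12-c$ one instead needs $\eta-\tfrac12-c>0$ on $C'$, i.e.\ a genuine margin $t_{-1}\ge\tfrac12+c$ (and $t_{1}\le\tfrac12-c$), and absent such a $c>0$ one could strictly lower $J_c$ by shrinking the confident region, so that $f^{*}$ would fail to be optimal --- this is precisely what Assumption~2 must guarantee. The only remaining technicality, possible atoms of $\eta$ at $t_{\pm1}$, is dealt with verbatim as in the proof of Lemma~\ref{equivalentloss1}.
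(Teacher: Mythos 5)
Your proof is correct and takes essentially the route the paper intends: the paper omits this proof, saying only that it is analogous to Lemma~\ref{equivalentloss1}, and your write-up carries out exactly that analogy (reduction to the complete class $\mathcal{F}^{*}(\varepsilon+1,0,0,-(\varepsilon+1))$, verification of simple monotonicity for the weighted objective, then cut-off identification via Lemma~\ref{sufficientlemma}). Your observation that the constant $c$ from Assumption~2 is needed precisely at the cut-off step, where the margin $t_{-1}\ge \tfrac{1}{2}+c$ and $t_{1}\le \tfrac{1}{2}-c$ replaces the inequality $\eta-\tfrac{1}{2}>0$ that came for free in Lemma~\ref{equivalentloss1}, correctly pinpoints the only place where the modified weight $\tfrac{1}{2}-c$ changes the argument.
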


The proof of this Lemma \ref{strongerequivalenttrueloss} is analogous to the proof of Lemma \ref{equivalentloss1}, thus is omitted here.

By Lemma \ref{strongerequivalenttrueloss}, we have \\
\begin{align}
\nonumber & \frac{1}{c}(\bar{R}(\hat{f},\varepsilon)-\bar{R}(f^{*},\varepsilon)) - (R(\hat{f},\varepsilon)-R(f^{*},\varepsilon)) \\
\nonumber =& \frac{1}{c}(P(Y\hat{f}(X) \leq \varepsilon)-P(Yf^{*}(X) \leq \varepsilon))-(P(|Y\hat{f}(X)| \leq \varepsilon)-P(|Yf^{*}(X)| \leq \varepsilon)) \\
\nonumber =& \frac{1}{c}(((\frac{1}{2}-c) P(|\hat{f}(X)| \leq \varepsilon)+P(Y\hat{f}(X) < -\varepsilon))-(\frac{1}{2}-c) P(|f^{*}(X)| \leq \varepsilon)+P(Yf^{*}(X) < -\varepsilon)) \\
\nonumber \geq &  0 
\end{align} 
\qed

The next step is the prove we can use the excess $R_{H}$ risk to bound the excess risk of $\bar{R}$, which gives the Lemma below.

\begin{lemma}\label{surexcessriskbound2}
	Under Assumption 2, for any $f$ satisfies the constraints in (3), we have 
	\begin{align}
	C \left( R_{H}(f,\varepsilon)-R_{H}(f^{*},\varepsilon) \right) \geq \left(\bar{R}\left(f,\varepsilon \right) - \bar{R}\left(f^{*},\varepsilon \right) \right)
	\end{align}
	where $C = \frac{1}{4c}+\frac{1}{2}$.
\end{lemma}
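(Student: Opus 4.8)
The plan is to pass to conditional risks and compare $R_H$ and $\bar R$ against $f^{*}$ region by region in $\eta(X)$, using the explicit form of $f^{*}$ from Lemma~\ref{equivalentloss1} together with the two coverage constraints in~(3). First reduce to $|f|\le 1+\varepsilon$ almost surely: replacing $f$ by its truncation at $\pm(1+\varepsilon)$ leaves the induced set classifier unchanged, hence does not change $\bar R(f,\varepsilon)$ or feasibility in~(3), and it does not increase $R_H(f,\varepsilon)$; since $|f^{*}|\le 1+\varepsilon$ already, it suffices to treat such $f$. For $|f|\le 1+\varepsilon$ one has $R_H(f,\varepsilon)=(1+\varepsilon)+\E[(1-2\eta(X))f(X)]$, so
\[
R_H(f,\varepsilon)-R_H(f^{*},\varepsilon)=\E\!\big[(1-2\eta(X))\,(f(X)-f^{*}(X))\big],
\]
while $\bar R(f,\varepsilon)-\bar R(f^{*},\varepsilon)=\E[D(X)]$, where $D(x)$ is the difference of conditional $\bar R$-risks and depends on $f(x)$ only through which of the bins $\{v>\varepsilon\}$, $\{|v|\le\varepsilon\}$, $\{v<-\varepsilon\}$ it lies in. It therefore suffices to show $\E[C\,\Psi_H(X)-D(X)]\ge 0$, where $\Psi_H(x)$ denotes the conditional $R_H$-excess of $f$ over $f^{*}$ at $x$.

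Split $\mathcal X$ according to the thresholds $t_{1}\le\tfrac12\le t_{-1}$ of Lemma~\ref{equivalentloss1}: the two confident regions $\{\eta>t_{-1}\}$ and $\{\eta<t_{1}\}$, where $f^{*}=\pm(1+\varepsilon)$, and the ambiguous region $\{t_{1}\le\eta\le t_{-1}\}$, where $|f^{*}|=\varepsilon$. On each confident region a short case analysis over the three bins of $f(x)$ gives $C\,\Psi_H(x)-D(x)\ge 0$ pointwise; this uses only the elementary bounds $1+\varepsilon-v\ge 1$ (resp.\ $1+\varepsilon+v\ge1$) on the non-matching bins and the fact that $C\ge1$, which holds because Assumption~2 forces $c<\tfrac12$ (otherwise the weight $\tfrac12-c$ of Lemma~\ref{strongerequivalenttrueloss} would be nonpositive and the everywhere-undecided function would be optimal, contradicting the form of $f^{*}$). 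The same pointwise inequality holds on the ambiguous region when $f(x)$ stays in the undecided bin $\{|v|\le\varepsilon\}$; it can fail only when $f$ becomes decisive there, a move that strictly lowers the hinge loss but raises $\bar R$.

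To control those bad contributions one feeds in the coverage constraints. For $j=-1$: feasibility of $f$ and $\{f^{*}>\varepsilon\}=\{\eta>t_{-1}\}$, together with the constraint being active at $f^{*}$, give $\E[(1-\eta)\ind{f>\varepsilon}]\le\E[(1-\eta)\ind{\eta>t_{-1}}]$, which rearranges to: the $(1-\eta)$-weighted mass $f$ places in $\{v>\varepsilon\}$ on the ambiguous region is at most the $(1-\eta)$-weighted mass $f$ withdraws from $\{v>\varepsilon\}$ on $\{\eta>t_{-1}\}$; symmetrically for $j=1$, the bin $\{v<-\varepsilon\}$, and $\{\eta<t_{1}\}$ (with weight $\eta$). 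One then bounds the (possibly negative) ambiguous-region contribution from below using $|f^{*}|=\varepsilon$ and the elementary inequality $\tfrac12-C(v-\varepsilon)\ge\tfrac12-C=-\tfrac1{4c}$ valid for $v\le1+\varepsilon$, bounds the matching confident-region withdrawals from below by their pointwise surplus, compares the weight factors across the relevant threshold (e.g.\ $1-\eta\ge 1-t_{-1}$ on the ambiguous side but $<1-t_{-1}$ on the confident side), and sums over $X$, so that the surplus absorbs the deficit. The constant $C=\tfrac1{4c}+\tfrac12$ is exactly what makes this absorption close: the $\tfrac1{4c}$ calibrates the weighted mass exchange and the $\tfrac12$ accounts for the $\tfrac12$-weight on the ambiguity term in $\bar R$ — indeed, for the canonical exchange that moves a unit of $(1-\eta)$-weighted mass from $\{v>\varepsilon\}$ over $\{\eta>t_{-1}\}$ into $\{v>\varepsilon\}$ over the ambiguous region, one computes $\Delta\bar R=\tfrac12\,\Delta R_H\ge0$, hence $C\,\Delta R_H\ge\Delta\bar R$.

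The main obstacle is precisely this last step. Unlike a textbook surrogate-consistency bound, $f^{*}$ is not the pointwise minimizer of either $R_H$ or $\bar R$, so no purely pointwise comparison is available: the constraints must be used, and the two interact, since a withdrawal forced by the $j=-1$ constraint on $\{\eta>t_{-1}\}$ may itself be a ``decisive-in-the-wrong-direction'' move governed by the $j=1$ constraint. The exchange/transfer argument therefore has to be run for both constraints simultaneously, chaining exchanges where needed, and one must check that all the threshold-dependent constants line up so that the total surplus still dominates the total deficit. A convenient way to organize this bookkeeping is the pairwise ``as good as / better than'' comparison between equal-weighted-mass sets on opposite sides of a threshold, already used in the proofs of Lemmas~\ref{equivalentloss1} and~\ref{strongerequivalenttrueloss}; once the bound $C(R_H(f,\varepsilon)-R_H(f^{*},\varepsilon))\ge\bar R(f,\varepsilon)-\bar R(f^{*},\varepsilon)$ is in hand it combines with Lemma~\ref{excessriskbound} to yield Theorem~2.
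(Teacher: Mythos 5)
Your overall plan is the right one and matches the paper's strategy in outline: reduce to bounded $f$, use the linear form of the hinge risk, compare bin-by-bin against $f^{*}$, observe that a pointwise comparison works off the ambiguous strip, and use the coverage constraints to absorb what happens on $\{t_{1}\le\eta\le t_{-1}\}$. But the proof is not complete, and the missing piece is precisely the step that carries the content of the lemma: showing that the surplus generated on $\{\eta>t_{-1}\}\cup\{\eta<t_{1}\}$ dominates the deficit created where $f$ is decisive inside the ambiguous strip, with the specific constant $C=\frac{1}{4c}+\frac{1}{2}$. You explicitly defer this ("one must check that all the threshold-dependent constants line up"), and Assumption 2 never enters your argument quantitatively -- you only use it to argue $c<\frac12$, whereas the constant $C$ exists exactly because $\eta-\frac12\ge c$ beyond $t_{-1}$ (and symmetrically beyond $t_{1}$) bounds the surplus per unit of mass from below. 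Two details of the sketch are also inaccurate. First, the failure mode is mischaracterized: a wrong-direction move on the strip ($v>\varepsilon$, $\eta<\frac12$) \emph{raises} the hinge excess, but only by $(1-2\eta)(v+\varepsilon)\ge 2\varepsilon(1-2\eta)$, which for small $\varepsilon$ is far smaller than $D=\frac12-\eta$; and a right-direction move ($v>\varepsilon$, $\eta>\frac12$) gives $C\Psi_H-D=(\eta-\frac12)\bigl(1-2C(v-\varepsilon)\bigr)$, which is negative for $v$ near $1+\varepsilon$, so that case too requires the constraints rather than a pointwise bound. Second, your "canonical exchange" identity $\Delta\bar R=\frac12\Delta R_H$ holds only on the withdrawal side of the exchange; on the insertion side $\Delta R_H$ can be of order $\varepsilon$ while $\Delta\bar R$ is of order $\frac12-\eta$, so the exchange does not close by itself, and the constraint only equates the two sides in \emph{class-conditional} ($(1-\eta)$-weighted) mass, not in the marginal mass that appears in the risks.

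For comparison, the paper closes this step as follows: after reducing $f$ to six values and writing both excess risks explicitly, the inequality is reduced to $A+2\varepsilon CB\ge0$ with $B\ge0$; dividing $A$ by $2C-1$, the left side is recognized as the excess $\bar R$-risk of the set classifier that is positive on $S_{\varepsilon+1}$, negative on $S_{-(\varepsilon+1)}$ and ambiguous elsewhere. That classifier is feasible for the tightened budgets $\alpha_{j}'=\alpha_{j}-P_{j}(\text{wrong-direction mass})$, so the threshold optimality of Lemma 1 at budgets $(\alpha_{-1}',\alpha_{1}')$ lower-bounds the left side by $E(\ind{t_{-1}<\eta\le t_{-1}'}(\eta-\tfrac12))+E(\ind{t_{1}'\le\eta<t_{1}}(\tfrac12-\eta))$ in one stroke -- this also disposes of the right-direction-decisive points automatically. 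Then a Bayes-formula step converts the class-conditional mass equality into the marginal inequality $P(t_{-1}<\eta\le t_{-1}')>P(S_{\varepsilon+})$ (using $\eta>\frac12$ on one side and $\eta<\frac12$ on the other), Assumption 2 gives the factor $c$, and $|\eta-\frac12|\le\frac12$ gives the factor $\frac12$, which is exactly where $c=\frac{1}{2(2C-1)}$, i.e.\ $C=\frac{1}{4c}+\frac12$, is used. Until you either reproduce this chain or supply a worked-out substitute (your exchange argument would need the conditional-to-marginal conversion, the margin bound, and the $\frac12$ bound made explicit, run simultaneously for both constraints), the stated inequality with the stated constant is not established, so as written there is a genuine gap.
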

	
	The proof consists of two steps. First, we will show that we only need to consider the $f$ which takes those values: $1+\varepsilon$, $\varepsilon+$, $\varepsilon$, $-\varepsilon$, $-\varepsilon-$, $-(1+\varepsilon)$. Here $\varepsilon+$ can be regarded as $\varepsilon$ plus a arbitrarily small number and it is similar for $-\varepsilon-$. This can be shown by direct calculation.
	
	Assume $f: \mX \rightarrow \mR$ is an arbitrary discriminate function. Then we consider another function $\bar{f}(\bx)=(1+\varepsilon)\ind{f(\bx)>\varepsilon,\eta(\bx) \geq \frac{1}{2}}+(\varepsilon+)\ind{f(\bx)>\varepsilon,\eta < \frac{1}{2}}+\varepsilon\ind{ |f(\bx)| \leq \varepsilon,\eta \geq \frac{1}{2}}+(-\varepsilon)\ind{|f(\bx)| \leq \varepsilon,\eta < \frac{1}{2}}+(-\varepsilon-)\ind{f(\bx)<-\varepsilon,\eta \geq \frac{1}{2}}+(-(1+\varepsilon))\ind{f(\bx)<-\varepsilon,\eta(\bx) < \frac{1}{2}}$. It is easy to see $\phi_{(f,\varepsilon)}=\phi_{(\bar{f},\varepsilon)}$ so that $\bar{R}(f,\varepsilon)-\bar{R}(f^{*},\varepsilon)=\bar{R}(\bar{f},\varepsilon)-\bar{R}(f^{*},\varepsilon)$. Moreover, by direct calculation, one can show that $R_{H}(f(\bx))\geq R_{H}(\bar{f}(\bx))$ for all $\bx$. So we can see change $f$ to $\bar{f}$ will always leads to a smaller excess surrogate risk while keep the excess risk the same.  
	
	The second part is to explicitly calculate the left hand side and the right hand side and show that the $C$ in the theorem really works. To simplify the notation, we give divide $\mathcal{X}$ by value of $f$(now $f$ take 6 values). For instance, we define $S_{\varepsilon+1}=\{\bx:f(\bx)=\varepsilon+1 \}$ and by the first part, we can assume $\eta(S_{\varepsilon+})<\frac{1}{2}$ and $\eta(S_{-\varepsilon-})>\frac{1}{2}$. To ease the notation, we omit the independent variable $X$ in following expressions although the expectation is really taken with respect to it. Then we have 

\begin{align*}
R_{H}(f)-R_{H}(f^*) &= E(\ind{S_{\varepsilon+1}}(2(1+\varepsilon)(1-\eta)))+E(\ind{S_{\varepsilon}}(1+2\varepsilon(1-\eta))) \\
&+ E(\ind{S_{-\varepsilon-}}(1+2\varepsilon \eta))+E(\ind{S_{\varepsilon+}}(1+2\varepsilon(1-\eta))) \\
&+ E(\ind{S_{-\varepsilon}}(1+2\varepsilon \eta))+E(\ind{S_{-(\varepsilon+1)}}(2(1+\varepsilon)\eta)) \\
&- E(\ind{\frac{1}{2} \leq \eta \leq t_{-1}}(1+2\varepsilon(1-\eta)))-E(\ind{t_{1} \leq \eta \leq \frac{1}{2}}(1+2\varepsilon \eta)) \\
&- E(\ind{ \eta > t_{-1}}(2(1+\varepsilon)(1-\eta)))-E(\ind{\eta < t_{1}}(2(1+\varepsilon)\eta))
\end{align*}
and
\begin{align*}
R(f)-R(f^*) &= E(\ind{S_{\varepsilon+1}}(1-\eta))+E(\ind{S_{\varepsilon}}(\frac{1}{2})) 
+ E(\ind{S_{-\varepsilon-}}(\eta) \\
&+E(\ind{S_{\varepsilon+}}(1-\eta)) + E(\ind{S_{-\varepsilon}}(\frac{1}{2}))+E(\ind{S_{-(\varepsilon+1)}}(\eta)) \\
&- E(\ind{\frac{1}{2} \leq \eta \leq t_{-1}}(\frac{1}{2}))-E(\ind{t_{1} \leq \eta \leq \frac{1}{2}}(\frac{1}{2})) \\
&- E(\ind{ \eta > t_{-1}}(1-\eta))-E(\ind{\eta < t_{1}}(\eta))
\end{align*}
Then by some algebra, we have $C(R_{H}(f)-R_{H}(f^{*})) \geq R(f)-R(f^*)$ is equivalent to 
$A+2\varepsilon C B \geq 0$ where 
\begin{align*}
A &= E(\ind{S_{\varepsilon+1}}((2C-1)(1-\eta)))+E(\ind{S_{\varepsilon}}(C-\frac{1}{2})) \\
&+ E(\ind{S_{-\varepsilon-}}(C-\eta))+E(\ind{S_{\varepsilon+}}(C-(1-\eta))) \\
&+ E(\ind{S_{-\varepsilon}}(C-\frac{1}{2}))+E(\ind{S_{-(\varepsilon+1)}}((2C-1)\eta)) \\
&- E(\ind{\frac{1}{2} \leq \eta \leq t_{-1}}(C-\frac{1}{2}))-E(\ind{t_{1} \leq \eta \leq \frac{1}{2}}(C-\frac{1}{2})) \\
&- E(\ind{ \eta > t_{-1}}((2C-1)(1-\eta)))-E(\ind{\eta < t_{1}}((2C-1)\eta))
\end{align*}
and $B = P(f(\bX)Y<0)-P(f^{*}(\bX)Y<0)$. By the definition of $f^{*}$ we can easily see that $B \geq 0$. So the rest is to show $A \geq 0$. We can only focus on $C>\frac{1}{2}$. Divide A by $2C-1$ and do some algebra, we have $A \geq 0$ is equivalent to
\begin{align*}
& (E(\ind{S_{\varepsilon+1}}((1-\eta))+E(\ind{S_{\varepsilon}}(\frac{1}{2})+E(\ind{S_{-\varepsilon-}}(\frac{1}{2})\\
&+ E(\ind{S_{\varepsilon+}}(\frac{1}{2}))+E(\ind{S_{-\varepsilon}}(\frac{1}{2})+E(\ind{S_{-(\varepsilon+1)}}(\eta))) \\
&- (E(\ind{\frac{1}{2} \leq \eta \leq t_{-1}}(\frac{1}{2}))+E(\ind{t_{1} \leq \eta \leq \frac{1}{2}}(\frac{1}{2}) \\
&+ E(\ind{ \eta > t_{-1}}(1-\eta)-E(\ind{\eta < t_{1}}(\eta)))  \\
&\geq \frac{1}{2C-1}(E(\ind{S_{-\varepsilon-}}(\eta-\frac{1}{2})
+ E(\ind{S_{\varepsilon+}}(\frac{1}{2}-\eta)))
\end{align*}
It is not hard to see the first part of the left hand side is a $\bar{R}$ risk of a classifier with $+1$ prediction at $S_{\varepsilon+1}$, negative prediction at $S_{-(\varepsilon+1)}$ and ambiguity else where. The second part is the risk of $f^{*}$. By definition of $f^{*}$, we have $P_{-1}(\eta \leq t_{-1})=1-\alpha_{-1}$, $P_{1}(\eta \geq t_{1})=1-\alpha_{1}$. Let $\alpha'_{-1}=\alpha_{-1}-P_{-1}(S_{\varepsilon+})$ and $\alpha'_{1}=\alpha_{1}-P_{1}(S_{-\varepsilon-})$ and let $t'_{-1}$ and $t'_{1}$ satisfy  $P_{-1}(\eta \leq t'_{-1})=1-\alpha'_{-1}$, $P_{1}(\eta \geq t'_{1})=1-\alpha'_{1}$. Because $\eta(S_{\varepsilon+})<\frac{1}{2}$, we have $P(t_{-1}<\eta \leq t'_{-1})>P(S_{\varepsilon+})$ by Bayes Formula. Similarly $P(t'_{1}\leq \eta < t_{1})>P(S_{-\varepsilon-})$. So at last, we have
\begin{align*}
 \text{LHS of above} &\geq E(\ind{t_{-1}<\eta  \leq t'_{-}}(\eta-\frac{1}{2}))+E(\ind{t'_{+}\leq \eta < t_{1}}(\frac{1}{2}-\eta))  \\
&\geq c(P(t_{-1}<\eta \leq t'_{-})+P(t'_{+}\leq \eta < t_{1}))\geq c(P(S_{\varepsilon+})+P(S_{-\varepsilon-})) \\
&= \frac{1}{2C-1}\frac{1}{2}(P(S_{\varepsilon+})+P(S_{-\varepsilon-})) \\
&\geq \frac{1}{2C-1}(E(\ind{S_{-\varepsilon-}}(\eta-\frac{1}{2})
+ E(\ind{S_{\varepsilon+}}(\frac{1}{2}-\eta)))
\end{align*}
So we have $A \geq 0$ thus the statement of our theorem holds.

Note that one can induce a small $\delta$, i.e., using $\varepsilon+\delta$ instead of using the notation $\varepsilon+$ and let $\delta$ goes to 0 at the end of the proof to make it more rigorous. However, because there is no limit involved in other parts of this proof, we can live with this notation to keep us from those trouble. 
\qed

Lastly, Theorem 2 is a direct corollary of Lemma \ref{excessriskbound} and Lemma \ref{surexcessriskbound2}.
\qed

\subsection*{Proof of Theorem 3}

	To prove this theorem, we need to introduce Rademacher complexity which has been widely used in statistical machine learning theory.  

Here we only prove inequality for $Y=-1$, the proof for $Y=1$ case can be down analogously. Without loss of generality, we assume the first $n_{-1}$ observations are from $-1$.	
	
	Let $\sigma = \{\sigma_{i};i = 1,...,n_{-1}\}$ be independent and identically distributed random variables from discrete uniform distribution U(\{-1,1\}). Also denote by $S$ a sample of observations $(\bm{x_i},y_i)$; i=1,...,$n_{-1}$, independent and identically distributed from the underlying distribution $P(\bm{X},Y|Y=-1)$ (Y will always be $-1$ in this case). we define the empirical Rademacher complexity of the function class with fixed b, $\mathcal{H}^{b}_{K}(s) = \{h(x)+b|h \in \mathcal{H}_{K}, ||h||_{\mathcal{H}_{K}} \leq s\}$ as follows,
	\begin{align}
	\hat{R}_{n_{-1}}\{\mathcal{H}^{b}_{K}(s)\} = E_{\sigma}[\sup_{f \in \mathcal{H}^{b}_{K}(s)}{\frac{1}{n_{-1}}\sum_{i=1}^{n_{-1}}\sigma_{i}H_{-\varepsilon}(y_{i}f(\bm{x_i}))}]
	\end{align}
	Here $E_{\sigma}$ means taking expectation with respect to the joint distribution of $\sigma$. Moreover, we can define the Rademacher complexity of $\mathcal{H}^{b}_{K}(s)$ to be 
	\begin{align}
	R_{n_{-1}}\{\mathcal{H}^{b}_{K}(s)\} = E_{\sigma,S}[\sup_{f \in \mathcal{H}^{b}_{K}(s)}{\frac{1}{n_{-1}}\sum_{i=1}^{n_{-1}}\sigma_{i}H_{-\varepsilon}(y_{i}f(\bm{X_i}))}]
	\end{align}
	where S is the sample space given $Y=-1$. 
	
	The next step is to construct the standard inequality of Rademacher complexity. It controls the expected hinge loss for negative group by the summation of empirical hinge loss, empirical Rademacher complexity and a small penalty term, which can be summarized in the following lemma. This lemma is important and will be used in the proves follows.
		
	\begin{lemma}\label{rademacherlemma}
		Let $\hat{R_{n}}\{\mathcal{H}^{b}_{K}(s)\}$ and $R_{n}\{\mathcal{H}^{b}_{K}(s)\}$ be defined as above. Then with probability at least $1-\zeta$, 
		\begin{align}\label{firstrade}
		E(H_{-\varepsilon}(Yf(\bm{X}))) \leq \frac{1}{n_{-1}}\sum_{i=1}^{n_{-1}}{H_{-\varepsilon}(y_{i}f(\bm{x_i}))} + 2R_{n_{-1}}\{\mathcal{H}^{b}_{K}(s)\} + T_{n_{-1}}(\zeta),
		\end{align}		
		Moreover, with probability at least $1-\zeta$, 
		\begin{align}\label{secondrade}
		E(H_{-\varepsilon}(Yf(\bm{X}))) \leq \frac{1}{n_{-1}}\sum_{i=1}^{n_{-1}}{H_{-\varepsilon}(y_{i}f(\bm{x_i}))} + 2\hat{R}_{n_{-1}}\{\mathcal{H}^{b}_{K}(s)\} + 3T_{n_{-1}}(\zeta/2).
		\end{align}
	\end{lemma}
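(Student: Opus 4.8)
The plan is to combine a bounded‑differences concentration inequality (McDiarmid) with the classical symmetrization (ghost‑sample) argument. The first thing to pin down is boundedness of the loss on the relevant class: since any $h \in \mathcal{H}_{K}$ with $||h||_{\mathcal{H}_{K}} \leq s$ obeys $|h(\bm x)| \leq s\sqrt{K(\bm x,\bm x)}$, which is uniformly bounded under the standing boundedness assumption on $K$, and since the intercept $b$ is held fixed, there is a finite $M$ with $0 \leq H_{-\varepsilon}(yf(\bm x)) \leq M$ for every $f \in \mathcal{H}^{b}_{K}(s)$ and every $(\bm x, y)$. With this, the deviation term $T_{n_{-1}}(\zeta)$ is the usual $M\sqrt{\log(1/\zeta)/(2n_{-1})}$ (matching its definition in the main text).

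For \eqref{firstrade}, introduce
\[
\Phi(S) = \sup_{f \in \mathcal{H}^{b}_{K}(s)} \Big( E\big(H_{-\varepsilon}(Yf(\bm X))\big) - \frac{1}{n_{-1}}\sum_{i=1}^{n_{-1}} H_{-\varepsilon}(y_{i}f(\bm x_{i})) \Big).
\]
Changing a single observation $(\bm x_{i}, y_{i})$ perturbs the empirical average inside the supremum by at most $M/n_{-1}$, hence perturbs $\Phi(S)$ by at most $M/n_{-1}$; McDiarmid's inequality then gives $\Phi(S) \leq E[\Phi(S)] + T_{n_{-1}}(\zeta)$ with probability at least $1-\zeta$. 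A standard symmetrization argument — replace the population expectation by the empirical average over an independent ghost sample $S'$, use that $H_{-\varepsilon}(y_{i}f(\bm x_{i})) - H_{-\varepsilon}(y'_{i}f(\bm x'_{i}))$ is symmetric in order to insert Rademacher signs $\sigma_{i}$, and then split the supremum — yields $E[\Phi(S)] \leq 2R_{n_{-1}}\{\mathcal{H}^{b}_{K}(s)\}$. Since $E(H_{-\varepsilon}(Yf(\bm X))) \leq \frac{1}{n_{-1}}\sum_{i} H_{-\varepsilon}(y_{i}f(\bm x_{i})) + \Phi(S)$ for every admissible $f$, inequality \eqref{firstrade} follows.

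For \eqref{secondrade}, I would in addition concentrate the population Rademacher complexity around its empirical version. The map $S \mapsto \hat{R}_{n_{-1}}\{\mathcal{H}^{b}_{K}(s)\}$ likewise has bounded differences $M/n_{-1}$ (again because $0 \leq H_{-\varepsilon} \leq M$ and $|\sigma_{i}| = 1$), so a second application of McDiarmid gives $R_{n_{-1}}\{\mathcal{H}^{b}_{K}(s)\} \leq \hat{R}_{n_{-1}}\{\mathcal{H}^{b}_{K}(s)\} + T_{n_{-1}}(\zeta/2)$ with probability at least $1-\zeta/2$. Taking \eqref{firstrade} at level $\zeta/2$, intersecting the two events (total failure probability at most $\zeta$), and substituting $2R_{n_{-1}} \leq 2\hat{R}_{n_{-1}} + 2T_{n_{-1}}(\zeta/2)$ produces the $3T_{n_{-1}}(\zeta/2)$ penalty.

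I expect the proof to be essentially routine once boundedness is secured. The only genuinely delicate points are (a) verifying that $H_{-\varepsilon}\circ f$ really is uniformly bounded on $\mathcal{H}^{b}_{K}(s)$, which leans on the kernel‑boundedness assumption from the main text together with the intercept being fixed, and (b) tracking constants carefully so that the two McDiarmid steps and the union bound combine to exactly $3T_{n_{-1}}(\zeta/2)$ in \eqref{secondrade} rather than a different multiple.
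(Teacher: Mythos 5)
Your proposal follows essentially the same route as the paper: McDiarmid on the supremum of the centered empirical process, symmetrization with a ghost sample to bound its expectation by $2R_{n_{-1}}\{\mathcal{H}^{b}_{K}(s)\}$, and a second application of McDiarmid plus a union bound at level $\zeta/2$ to pass to the empirical Rademacher complexity with the $3T_{n_{-1}}(\zeta/2)$ penalty. The only (minor) divergence is in the bounded-differences constant: the paper exploits the $1$-Lipschitz property of the hinge loss together with the reproducing property to get a perturbation bound of order $\sqrt{sr}/n_{-1}$ (which is how its $T_{n_{-1}}(\zeta)$ is calibrated), whereas you use a uniform bound $M$ on the loss itself, so you would need to check that your $M$ reproduces the paper's definition of $T_{n_{-1}}(\zeta)$.
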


	\begin{proof}

	 The proof consist of three parts. In the first part, we use the McDiarmid inequality to bound the left hand side of inequality \ref{firstrade} by its empirical counterpart and $\phi(S)$ which is define below:
	\begin{align*}
	\phi(S) = \sup_{f \in \mathcal{H}^{b}_{K}(s)}{\{E(H_{-\varepsilon}(Yf(X))) - \frac{1}{n_{-1}}\sum_{i=1}^{n_{-1}}H_{-\varepsilon}(y_{i}f(\bm{x_{i}}))\}}
	\end{align*}
	
	Let $S^{(i)} = \{(\bm{x_1},y_1),...(\bm{x_i}{'},y_i),...(\bm{x_n},y_n)\}$ be another sample from $P(\bm{X},Y|Y = -1)$,  where the difference between $S$ and $S^{(i,x)}$ is just the $i$th observation. Then by definition, we have
	\begin{align*}
	\nonumber \phi(S)-\phi(S^{(i)}) &= \sup_{f \in \mathcal{H}^{b}_{K}(s)}{\{E(H_{-\varepsilon}(Yf(X))) - \frac{1}{n_{-1}}\sum_{S}H_{-\varepsilon}(y_{i}f(\bm{x_{i}}))\}} \\
	\nonumber &- \sup_{f \in \mathcal{H}^{b}_{K}(s)}{\{E(H_{-\varepsilon}(Yf(X))) - \frac{1}{n_{-1}}\sum_{S^{i,\bm{x}}}H_{-\varepsilon}(y_{i}f(\bm{x_{i}}))\}}.
	\end{align*}
	
	Note that it is easy to show the difference of supremum of two functions is smaller than the supremum of the difference of two functions. 
	
	Then we have 
	\begin{align}
	\nonumber \phi(S)-\phi(S^{(i)}) &\leq  \sup_{f \in \mathcal{H}^{b}_{K}(s)}{\{E(H_{-\varepsilon}(Yf(X))) - \frac{1}{n_{-1}}\sum_{S}H_{-\varepsilon}(y_{i}f(\bm{x_{i}}))\}} \\
	\nonumber &- \{E(H_{-\varepsilon}(Yf(X))) - \frac{1}{n_{-1}}\sum_{S^{i,\bm{x}}}H_{-\varepsilon}(y_{i}f(\bm{x_{i}}))\} \\
	\nonumber &= \sup_{f \in \mathcal{H}^{b}_{K}(s)} {\{\frac{1}{n_{-1}} H_{-\varepsilon}(y_{i}f(\bm{x_i}))-H_{-\varepsilon}(y_{i}f(\bm{x_i}'))\}} \\
	\nonumber &\leq  \sup_{f \in \mathcal{H}^{b}_{K}(s)} {\{\frac{1}{n_{-1}} |\{f(\bm{x_i})-f(\bm{x_i}')|\}}  \\
	\nonumber &\leq  \sup_{h \in \mathcal{H}_{K}, ||h||_{\mathcal{H}_{K}} \leq s} {\{\frac{1}{n_{-1}} |\{h(\bm{x_i})-h(\bm{x_i}')|\}} \\
	\nonumber &\leq  \sup_{h \in \mathcal{H}_{K}, ||h||_{\mathcal{H}_{K}} \leq s} {\{\frac{1}{n_{-1}} |\langle h,K(\bx_{i},\cdot) \rangle - \langle h,K(\bx_{i}',\cdot ) \rangle |\}} \\
	\nonumber &\leq  \frac{1}{n_{-1}} \sup_{h \in \mathcal{H}_{K}, ||h||_{\mathcal{H}_{K}} \leq s} {\{ |\langle h,K(\bx_{i},\cdot) \rangle | + |\langle h,K(\bx_{i}',\cdot ) \rangle |\}} \\
	\nonumber &\leq  \frac{2}{n_{-1}} \sup_{h \in \mathcal{H}_{K}, ||h||_{\mathcal{H}_{K}} \leq s, \bx \in \mathcal{X}} {\{ \sqrt{||h||_{\mathcal{H}_{K}}||K(\bx,\bx)||\}}} \\
	\nonumber &\leq  \frac{2\sqrt{sr}}{n_{-1}}
	\end{align}
	
	Because $S$ and $S^{i}$ are symmetric, as a result, we have $|\phi(S)-\phi(S^{(i)})| \leq \frac{2\sqrt{sr}}{n_{-1}}$.
	
	Next, by the McDiarmid inequality, we have that for any $t>0$, $P(\phi - E(\phi(S))\geq t)\leq exp(-\frac{t^2n_{-1}}{2sr} )$, or equivalently, with probability $1-\zeta$, $\phi(S) - E(\phi(S))\leq T_{n}(\zeta)$. Consequently, we have that with probability at least $1-\zeta$, $E(H_{-\varepsilon}(Yf(\bm{X}))) \leq \frac{1}{n}\sum_{i=1}^{n}{H_{-\varepsilon}(y_{i}f(\bm{x_i}))}+E\{\phi(S)\}+T_{n_{-1}}(\zeta)$. This gives the first part of the proof. 
	
	In the second part, we need to bound $E\{\phi(S)\}$ by the Rademacher complexity. Define $S' = \{(x'_{i},y'_{i});i=1,...,n_{-1}\}$ as an independent identical duplicate of $S$. Then we have that
	\begin{align}
	\nonumber E\{\phi(S)\} &= E_{S}(\sup_{f \in \mathcal{H}^{b}_{K}(s)}{E_{S'}[\frac{1}{n}\sum_{S'}{H_{-\varepsilon}(y'_{i}f(\bm{x'_{i}}))}-\frac{1}{n}\sum_{S}{H_{-\varepsilon}(y_{i}f(\bm{x_{i}}))}]}|S) \\
	\nonumber &\leq  E_{S,S'}[\frac{1}{n}\sum_{S'}{H_{-\varepsilon}(y'_{i}f(\bm{x'_{i}}))}-\frac{1}{n}\sum_{S}{H_{\varepsilon}(y_{i}f(\bm{x_{i}}))}] \\
	\nonumber &= E_{S,S',\sigma}[\frac{1}{n}\sum_{S'}{\sigma_{i}H_{-\varepsilon}(y'_{i}f(\bm{x'_{i}}))}-\frac{1}{n}\sum_{S}{\sigma_{i}H_{\varepsilon}(y_{i}f(\bm{x_{i}}))}] \\
	\nonumber &\leq  2R_{n}\{\mathcal{H}^{b}_{K}(s)\}
	\end{align}
	
	Combining the first and second step, we have already proved first inequality in Lemma \ref{rademacherlemma}. 
	
	The third step is analogous to the first step. We will use the empirical Rademacher complexity to bound the population Rademacher complexity. 
	
	This can be shown by defining $\psi(S) = \hat{R_{n}}\{\mathcal{H}^{b}_{K}(s)\}$ and it is easy to see $|\psi(S)-\psi(S')| \leq \frac{2\sqrt{sr}}{n_{-1}}$ by the definition of empirical Rademacher complexity. Then we can use McDiarmid inequality again and get with probability at least $1-\zeta$, $\psi(S)-E(\psi(S)) \leq T_{n}(\zeta)$. At last, we can combine this outcome and \ref{firstrade} by choose the confidence level to be $1-\zeta/2$ to get \ref{secondrade}.
	\end{proof}
	
	Then last step will be controlling the empirical Rademacher complexity for kernel learning. In particular, by Lemma 4.2 and Theorem 5.5 in \cite{mohri2012foundations}, we can have that $\hat{R_{n}}\{\mathcal{H}^{b}_{K}(s)\}$ can be upper bounded by the following inequality
	
	\begin{align*}
	\hat{R_{n}}\{\mathcal{H}^{b}_{K}(s)\} &\leq  E_{\sigma}[\sup_{h \in \mathcal{H}_{K}, ||h||_{\mathcal{H}_{K}} \leq s}{\frac{1}{n}\sum_{i=1}^{n}{\sigma_i h(\bm{x_i})}}] \\
	& \leq \frac{rs}{\sqrt{n}}
	\end{align*}
\qed

\subsection*{Proof of Theorem 4}
This proof is similar to proof of Theorem 5 in \cite{rigollet2011neyman}.

Statement (a) of this theorem is the direct Corollary of Theorem 3. One can see the proof for Lemma \ref{rademacherlemma} does not only work for $H_{-\varepsilon}$, but also work for $H_{c}$ with any $c$. In particular, it works for $\varepsilon$. So define the events $E_{1}$ and $E_{2}$. Let $R^{j}_{H}(f,c) = E(H_{c}(Yf(X))|Y=j)$, and $\hat{R}^{j}_{H}(f,c)$ be its empirical counterpart for $j = \pm 1$. 
\begin{align*}
E^{-}(f,\varepsilon) &= \{|\hat{R}^{-1}_{H}(f,-{\varepsilon}) - R^{-1}_{H}(f,-{\varepsilon})| \leq \frac{\kappa}{\sqrt{n_{-1}}},|\hat{R}^{1}_{H}(f,-{\varepsilon}) - R^{1}_{H}(f,-{\varepsilon})| \leq \frac{\kappa}{\sqrt{n_{1}}}\} \\    
E^{+}(f,\varepsilon) &= \{|\hat{R}^{-1}_{H}(f,{\varepsilon}) - R^{-1}_{H}(f,{\varepsilon})| \leq \frac{\kappa}{\sqrt{n_{-1}}},|\hat{R}^{1}_{H}(f,{\varepsilon}) - R^{1}_{H}(f,{\varepsilon})| \leq \frac{\kappa}{\sqrt{n_{1}}}\}          
\end{align*}
By Theorem 3, $P(E) \geq 1-2\zeta$ for any fix $f \in \mathcal{H}_{K}(s)$ and $c \in \mathbb{R}$. 

To study statement (b), we can decompose the left hand side of the inequality into three parts and study them one by one. To simplify notations, we are going to omit norm upper bound $s$ in notation of $\mathcal{F}$. For instance, we will write $\mathcal{F}_{\varepsilon}(0)$ instead of $\mathcal{F}_{\varepsilon}(0,s)$.
$$R_{H}(\hat{f},\hat{\varepsilon})-\min_{(f,\varepsilon) \in \mathcal{F}(0)}{R_{H}(f,{\varepsilon})} = A_{1}+A_{2}+A_{3}$$

Where 
\begin{align*}
A_{1} &= (R_{H}(\hat{f},\hat{\varepsilon})-\hat{R}_{H}(\hat{f},\hat{\varepsilon})) + (\hat{R}_{H}(\hat{f},\hat{\varepsilon}) - \min_{(f,\varepsilon) \in \hat{\mathcal{F}}(\kappa)}{R_{H}(f,{\varepsilon})}) \\
A_{2} &= \min_{(f,\varepsilon) \in \hat{\mathcal{F}}(\kappa)}{R_{H}(f,{\varepsilon})} - \min_{(f,\varepsilon) \in \mathcal{F}(2\kappa)}{R_{H}(f,{\varepsilon})}                 \\
A_{3} &= \min_{(f,\varepsilon) \in \mathcal{F}(2\kappa)}{R_{H}(f,{\varepsilon})} - \min_{(f,\varepsilon) \in \mathcal{F}(0)}{R_{H}(f,{\varepsilon})}
\end{align*}

Because we only focus on $E_1 = E^{+}(\hat{f},\hat{\varepsilon}) \bigcap E^{+}(\argmin_{(f,\varepsilon) \in \hat{\mathcal{F}}(\kappa)}{R_{H}(f,\varepsilon)})$, then we have $$
A_{1} \leq \frac{2\kappa}{\sqrt{n}}
$$
It is easy to see that $A_{2} \leq 0$ for large enough n on $E_2 = E^{-}(\argmin_{(f,\varepsilon) \in {\mathcal{F}}(2\kappa)}{R_{H}(f,\varepsilon)})$.

The last part of the proof is to bound $A_3$. To begin with the proof, let's first introduce a lemma. 

\begin{lemma}\label{convexlemma}
Let $\gamma_{s}((\alpha_{-1},\alpha_{1})) = $ be a function from $[0,1]^2$ to $\inf\limits_{f \in \mathcal{F}(0,s)}{R_{H}(f,\varepsilon)}$. Then $\gamma_{\varepsilon}$ is convex in $[0,1]^2$. Moreover, $\gamma_{s}((\alpha_{-1},\alpha_{1})) \leq \gamma_{s}((\alpha_{-1}',\alpha_{1}'))$ for $\alpha_{-1} \geq \alpha_{-1}'$ and $\alpha_{1} \geq \alpha_{1}'$.
\end{lemma}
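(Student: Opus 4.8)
The plan is to establish the two claims separately, beginning with monotonicity since it is the easier of the two. For monotonicity, fix $(\alpha_{-1}',\alpha_1')$ with $\alpha_{-1} \geq \alpha_{-1}'$ and $\alpha_1 \geq \alpha_1'$. The feasible region of the optimization problem defining $\gamma_s((\alpha_{-1}',\alpha_1'))$, namely $\{f \in \mathcal{F}(0,s): P_j(Yf(X) < -\varepsilon) \leq \alpha_j', j = \pm 1\}$, is contained in the feasible region for $\gamma_s((\alpha_{-1},\alpha_1))$ because each constraint is relaxed. Taking the infimum of the same objective $R_H(f,\varepsilon)$ over a larger set can only decrease it, so $\gamma_s((\alpha_{-1},\alpha_1)) \leq \gamma_s((\alpha_{-1}',\alpha_1'))$. (Here I am reading $\varepsilon$ in the statement as the optimization variable paired with $f$, as in the preceding theorems, or as a fixed parameter — either way the nesting-of-feasible-sets argument is unaffected.)

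For convexity, I would use the standard device of interpolating feasible points. Take $(\alpha_{-1}^{(0)},\alpha_1^{(0)})$ and $(\alpha_{-1}^{(1)},\alpha_1^{(1)})$ in $[0,1]^2$ and $\lambda \in [0,1]$, and write $(\alpha_{-1}^\lambda,\alpha_1^\lambda) = \lambda(\alpha_{-1}^{(1)},\alpha_1^{(1)}) + (1-\lambda)(\alpha_{-1}^{(0)},\alpha_1^{(0)})$. Let $f_0, f_1$ be near-optimal feasible discriminant functions for the two endpoint problems (within $\delta$ of the respective infima). The obstacle is that $\mathcal{F}(0,s)$ — functions in an RKHS ball — is convex, but the objective $R_H(f,\varepsilon)$ is the expectation of the truncated hinge loss composed with $f$, which is convex in $f$ pointwise, so $R_H(\lambda f_1 + (1-\lambda)f_0, \varepsilon) \leq \lambda R_H(f_1,\varepsilon) + (1-\lambda) R_H(f_0,\varepsilon)$; however the constraint functions $f \mapsto P_j(Yf(X) < -\varepsilon)$ are \emph{not} convex in $f$, so $\lambda f_1 + (1-\lambda)f_0$ need not be feasible for the interpolated constraint level $(\alpha_{-1}^\lambda, \alpha_1^\lambda)$. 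This is the main difficulty.

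The way around it is to work with the characterization from Lemma \ref{equivalentloss1} and Theorem 1 rather than with arbitrary $f$: the optimal $f$ for parameter $(\alpha_{-1},\alpha_1)$ is determined by threshold levels $t_1 \leq 1/2 \leq t_{-1}$ on $\eta(x)$ satisfying $P_{-1}(\eta(X) \leq t_{-1}) = 1 - \alpha_{-1}$ and $P_1(\eta(X) \geq t_1) = 1 - \alpha_1$, and the resulting minimal risk $\gamma_s$ can be written explicitly as an integral against the conditional distributions of $\eta(X)$. I would derive this closed form, then verify convexity directly: reparametrize by the quantile levels, observe that the risk is (after the substitution) a sum of terms each of which is a convex function of the corresponding constraint level because the integrand $\eta \mapsto \eta$ or $1-\eta$ is monotone and we are integrating over a one-sided tail whose measure is the constraint value. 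Concretely, a quantity of the form $\int_{q^{-1}(1-\alpha)}^{1} (1-\eta)\, dP_{-1}^\eta(\eta)$, viewed as a function of $\alpha$, has derivative $1 - t_{-1}(\alpha)$ which is nonincreasing in $\alpha$ since $t_{-1}$ is nondecreasing, giving convexity; the $t_1$ side is symmetric, and separability in the two arguments yields joint convexity. An alternative, if one wants to avoid the RKHS-ball restriction, is to replace hard constraints with their Lagrangian and use that the value function (infimal projection) of a jointly convex function is convex — but since the constraints are not convex in $f$ this again forces the reduction to the $\eta$-threshold parametrization, so I expect the explicit-formula route to be the cleanest. I would flag the non-convexity of the probability constraints as the point requiring care and handle it exactly by passing to the sufficient statistic $\eta$.
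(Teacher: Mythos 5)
Your monotonicity argument is fine and is exactly the paper's (nested feasible sets). The convexity half, however, goes wrong at the point where you decide the direct interpolation argument cannot work. The "obstacle" you flag — that the constraints are the probability constraints $P_j(Yf(X)<-\varepsilon)\leq\alpha_j$ and hence non-convex in $f$ — is a misreading of which optimization problem defines $\gamma_s$. The class $\mathcal{F}_{\varepsilon}(\alpha_{-1},\alpha_{1},s)$ used in Theorem 4 is cut out by the \emph{surrogate} constraints, the population analogues of $\sum_{y_i=j}w_i\eta_i\leq n_j\alpha_j$ with $\eta_i\geq 1-\varepsilon-y_if(\bm{x_i})$ in problem (6), i.e.\ constraints of the form $E_j\bigl[H_{-\varepsilon}(Yf(X))\bigr]\leq\alpha_j$. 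Since $(f,\varepsilon)\mapsto(1-\varepsilon-yf(x))_{+}$ is jointly convex, both the objective $R_H(f,\varepsilon)$ and the constraint functionals are jointly convex in $(f,\varepsilon)$, and the RKHS ball is convex; so taking near-optimal $(f_1,\varepsilon_1)$, $(f_2,\varepsilon_2)$ for the two constraint levels, their convex combination is feasible for the interpolated level and the value function is convex. That is the paper's proof, and it is the argument you abandoned.

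The replacement you propose does not repair this, because it proves a statement about a different object. The threshold characterization from Lemma 1 / Theorem 1 describes the minimizer over \emph{all measurable} $f$ under the $0$--$1$ constraints, whereas $\gamma_s$ is an infimum over the ball $\{f\in\mathcal{H}_K,\ \|f\|_{\mathcal{H}_K}\leq s\}$ under the hinge constraints; the threshold-type solution need not belong to this ball, so no closed-form expression of $\gamma_s$ as tail integrals of the conditional distribution of $\eta(X)$ is available, and convexity of the unrestricted $0$--$1$ value function is not what Theorem 4 needs when bounding $A_3$ (there $\gamma_s$ must be the restricted surrogate value, since $\hat f$ is compared within $\mathcal{H}_K(s)$). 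There is also a secondary slip in the sketched computation: with $P_{-1}(\eta\leq t_{-1})=1-\alpha_{-1}$, the threshold $t_{-1}$ is non\emph{increasing} in $\alpha_{-1}$, so the monotonicity claim for the derivative would need to be rechecked even in the unrestricted setting. The fix is simply to identify the constraints correctly and run the interpolation argument you already outlined for the objective, now applied to the constraints as well.
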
	

\begin{proof}
By the convexity of loss function $H_{c}$, we have $E(H_{(\theta c_1 +(1-\theta)c_2)}(Y(\theta f_{1}(X)+(1-\theta) f_{2}(X)))) \leq \theta E(H_{c_1}(Yf_{1}(X))) + (1-\theta)E(H_{c_2}(Yf_{2}(X)))$ for all $\theta \in [0,1]$. By definition of infimum, for any $\mu > 0$ and $\alpha^{1} = (\alpha_{-1},\alpha_{1})$, there exists a $f_{1} \in \mathcal{F}(0,s)$, such that $\gamma_s(\alpha^1) > E(H_{\varepsilon}(Yf_{1}(X))) - \mu$, and for another $\alpha^2$, there exists a $f_2$ as well.

By the argument above, we have $\gamma_{s}(\theta \alpha^{1}+(1-\theta)\alpha^{2}) \leq E(H_{(\theta \varepsilon_1 +(1-\theta)\varepsilon_2)}(Y(\theta f_{1}(X)+(1-\theta) f_{2}(X)))) \leq \theta E(H_{\varepsilon_1}(Yf_{1}(X))) + (1-\theta)E(H_{\varepsilon_2}(Yf_{2}(X))) \leq \theta \gamma_{s}(\alpha^1) + (1-\theta)\gamma_{s}(\alpha^2) - \mu$ for all positive $\mu$. And it is easy to verify that $\theta f_{1}+(1-\theta) f_{2}$ and $(\theta \varepsilon_1 +(1-\theta)\varepsilon_2)$ give satisfy the constraints. So that $\gamma_{s}$ is convex. 

The second statement of the lemma is easy to see by noticing that $\mathcal{H}_{K}(s) \cap \mathcal{F}_{\varepsilon}(\alpha_{-1}',\alpha_{1}') \subset \mathcal{H}_{K}(s) \cap \mathcal{F}_{\varepsilon}(\alpha_{-1},\alpha_{1})$ for $\alpha_{-1} \geq \alpha_{-1}'$ and $\alpha_{1} \geq \alpha_{1}'$.
\end{proof}

The last part of the proof is from the convexity of $\gamma_s$. For a large enough $n_{-1}$ and $n_{1}$, we will finally have $\frac{\kappa}{\sqrt{n_{-1}}} < \alpha_{-1},\frac{\kappa}{\sqrt{n_{1}}} < \alpha_{1})$. Let $\nu = (\frac{\kappa}{\sqrt{n_{-1}}},\frac{\kappa}{\sqrt{n_{1}}})$.

Now by convexity of $\gamma_s$, we have $\gamma_{s}(\alpha) - \gamma_{s}(\alpha-\nu) \geq \nu \cdot g$ and $\gamma_{s}(\alpha-\nu_{0}) - \gamma_{s}(\alpha-\nu) \geq (\nu-\nu_{0}) \cdot g$, where g in any member of the subgradiant of $\gamma$ at $\alpha-\nu$. After combining these two inequalities, we have
\begin{align*}
\gamma_{s}((0,0)) - \gamma_{s}(\alpha-\nu) &\geq  (\alpha-\nu) \cdot (-g) \\
&\geq  \min{\{\frac{\alpha_{-1}-\frac{\kappa}{\sqrt{n_{-1}}}}{\frac{\kappa}{\sqrt{n_{-1}}}},\frac{\alpha_{1}-\frac{\kappa}{\sqrt{n_{1}}}}{\frac{\kappa}{\sqrt{n_{1}}}}\}} \nu \cdot (-g) \\
&\geq  \min{\{\frac{\alpha_{-1}-\frac{\kappa}{\sqrt{n_{-1}}}}{\frac{\kappa}{\sqrt{n_{-1}}}},\frac{\alpha_{1}-\frac{\kappa}{\sqrt{n_{1}}}}{\frac{\kappa}{\sqrt{n_{1}}}}\}} (\gamma_{s}(\alpha-\nu) - \gamma_{s}(\alpha))
\end{align*} 
This will finally lead us to $\gamma_{s}(\alpha-\nu) - \gamma_{s}(\alpha) \leq (\gamma_{s}(\alpha) - \gamma_{s}(\alpha-\nu))\frac{2\max{\{\frac{\kappa}{\sqrt{n_{-1}}},\frac{\kappa}{\sqrt{n_{1}}}}\}}{\min{\{\alpha_{-1},\alpha_{1}\}}} \leq \frac{2\kappa}{\min{\{\alpha_{-1},\alpha_{1}\}}\min{\{\sqrt{n_{-1}},\sqrt{n_{1}}\}}}$. The last inequality is directly from the fact that $f(x) \equiv 0$ and $\varepsilon = 0$ satisfies the constraints of problem (5) and gives a loss 1 (in the main work). Thus we have $A_{3} \leq \frac{4\kappa}{\min{\{\alpha_{-1},\alpha_{1}\}}\min{\{\sqrt{n_{-1}},\sqrt{n_{1}}\}}}$.

Then the proof is finished by combining $A_1, A_{2}, A_{3}$. \qed

\section{More on numerical study}

For each simulation scenario, we give a plot of non-coverage rates for both $-1$ and $1$ class. We also give plots of the proportion of instances in which both classes have the desired test non-coverage rates, e.g. 0.05 or smaller. 

\subsection*{Linear model with nonlinear Bayes rule}

\begin{figure}[H]
	\includegraphics[height = 6cm,width=1\textwidth]{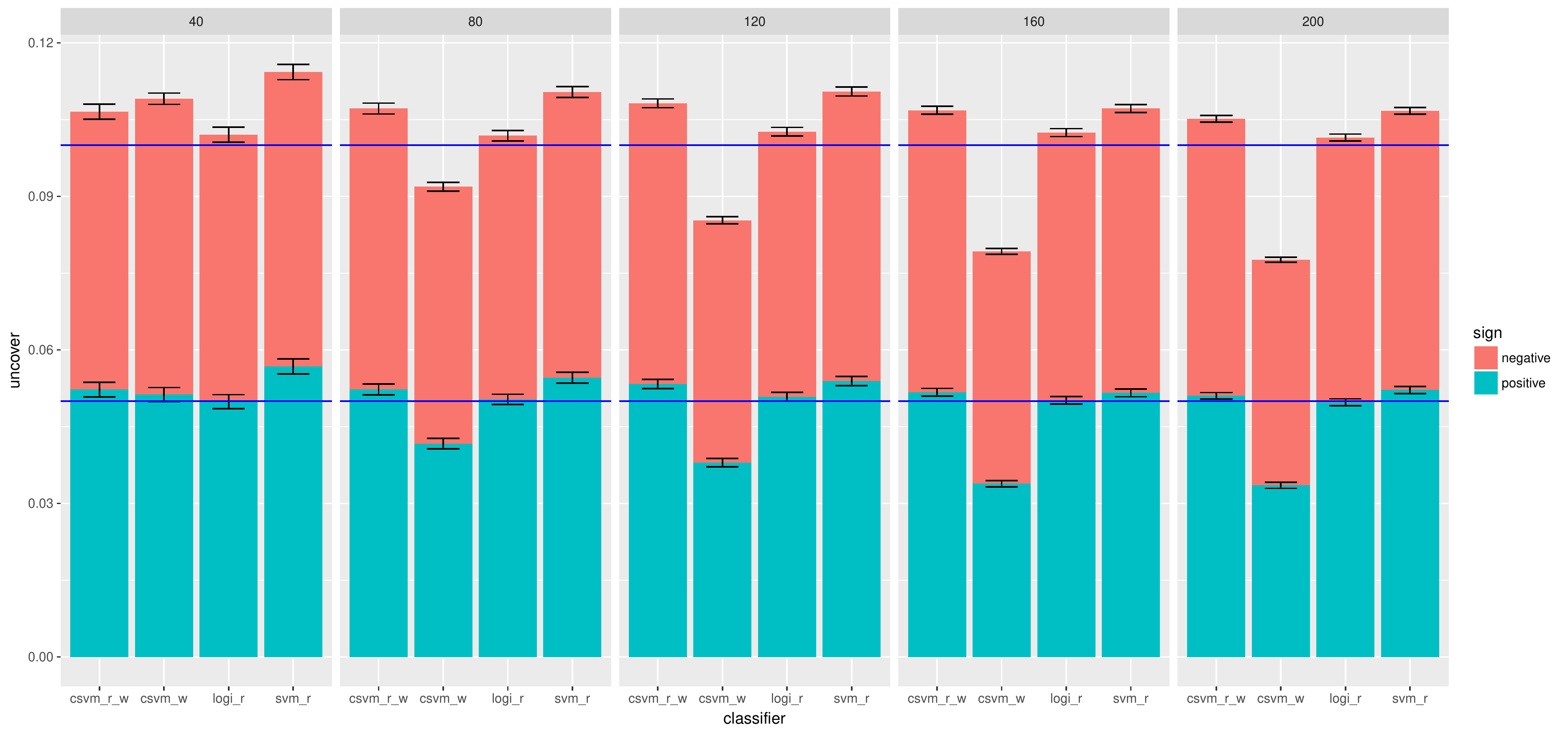}
	\caption{Non-coverage rates for all the models. We can see that weighted CSVM has a smaller non-coverage rates when sample size become larger, which explains why it has a relatively larger ambiguity. It worth to note that when $n=80$, weighted CSVM has a significantly smaller non-coverage rates than plug-in methods and maintain a smaller (or comparable) ambiguity.}
\end{figure}
\begin{figure}[H]
	\includegraphics[height = 6cm,width=1\textwidth]{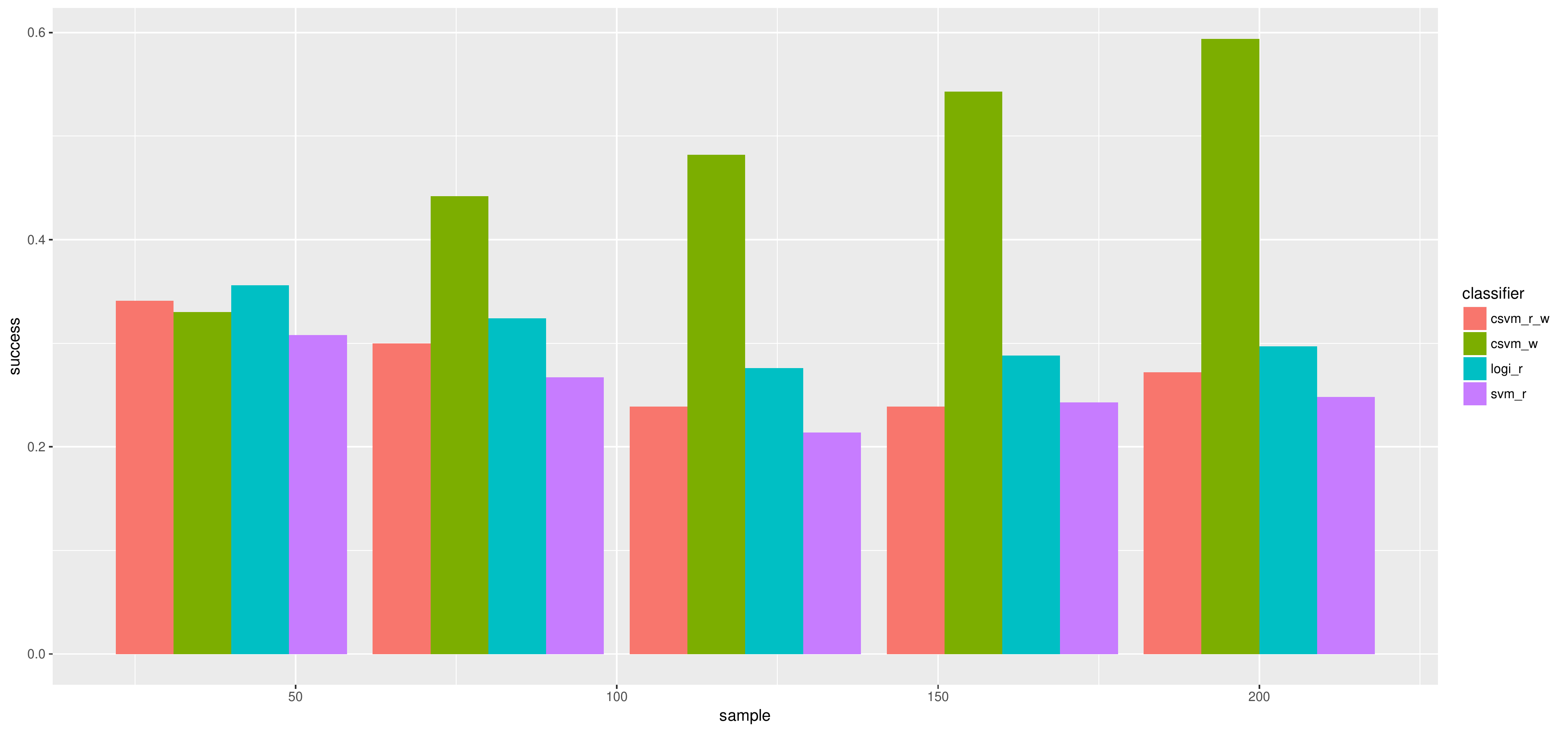}
	\caption{Success (to cover desired observations) rates for all the models. We can see that weighted CSVM has a greater success rates when sample size become larger, which also explains why it has a relatively larger ambiguity. It worth to note that when $n=80$, weighted CSVM has a much larger non-coverage rates than plug-in methods and maintain a smaller (or comparable) ambiguity.}
\end{figure}

\subsection*{Moderate dimensional polynomial boundary}

\begin{figure}[H]
	\includegraphics[height = 6cm,width=1\textwidth]{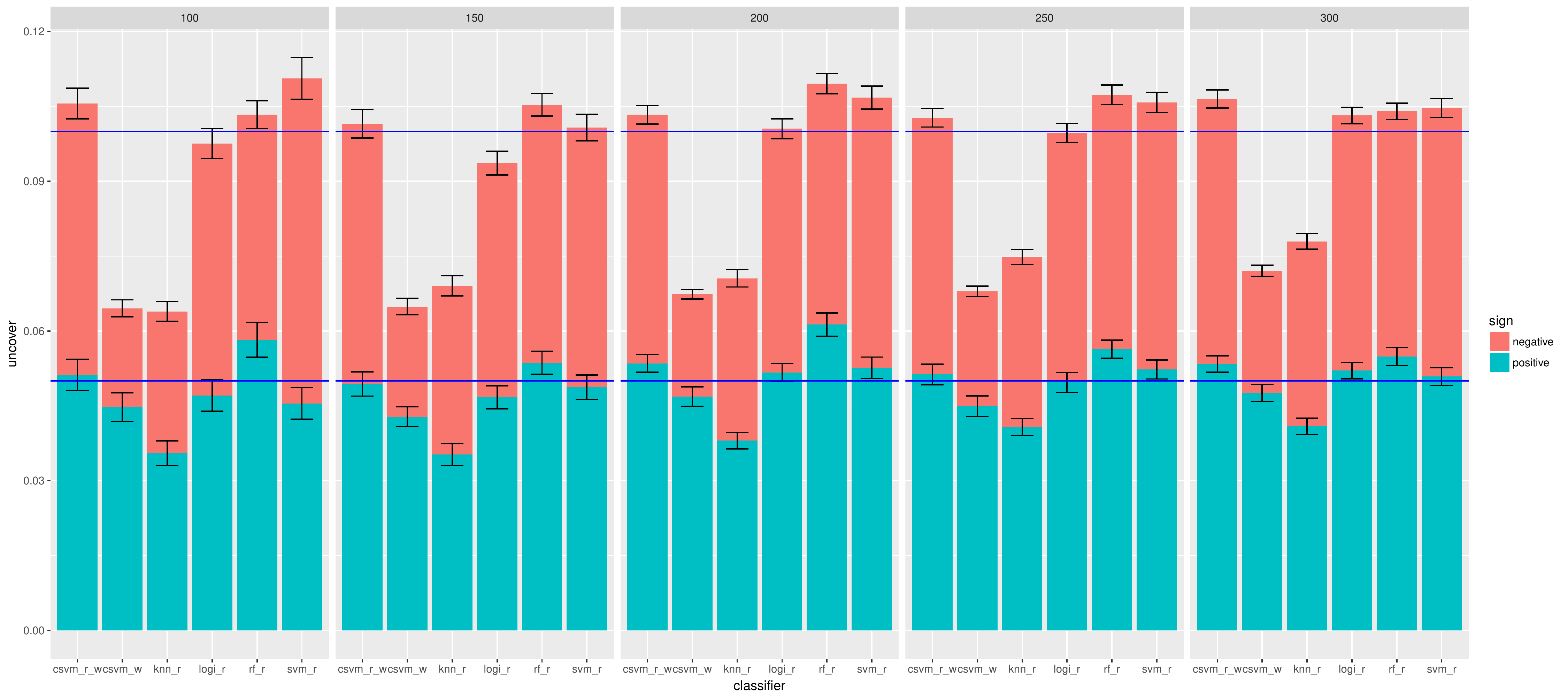}
	\caption{Non-coverage rates for all the models. We can see that weighted CSVM and kNN has a smaller non-coverage rates when the other three have similar non-coverage rates. But within those two groups, the proposed model always has a smaller ambiguity.}
\end{figure}
\begin{figure}[H]
	\includegraphics[height = 6cm,width=1\textwidth]{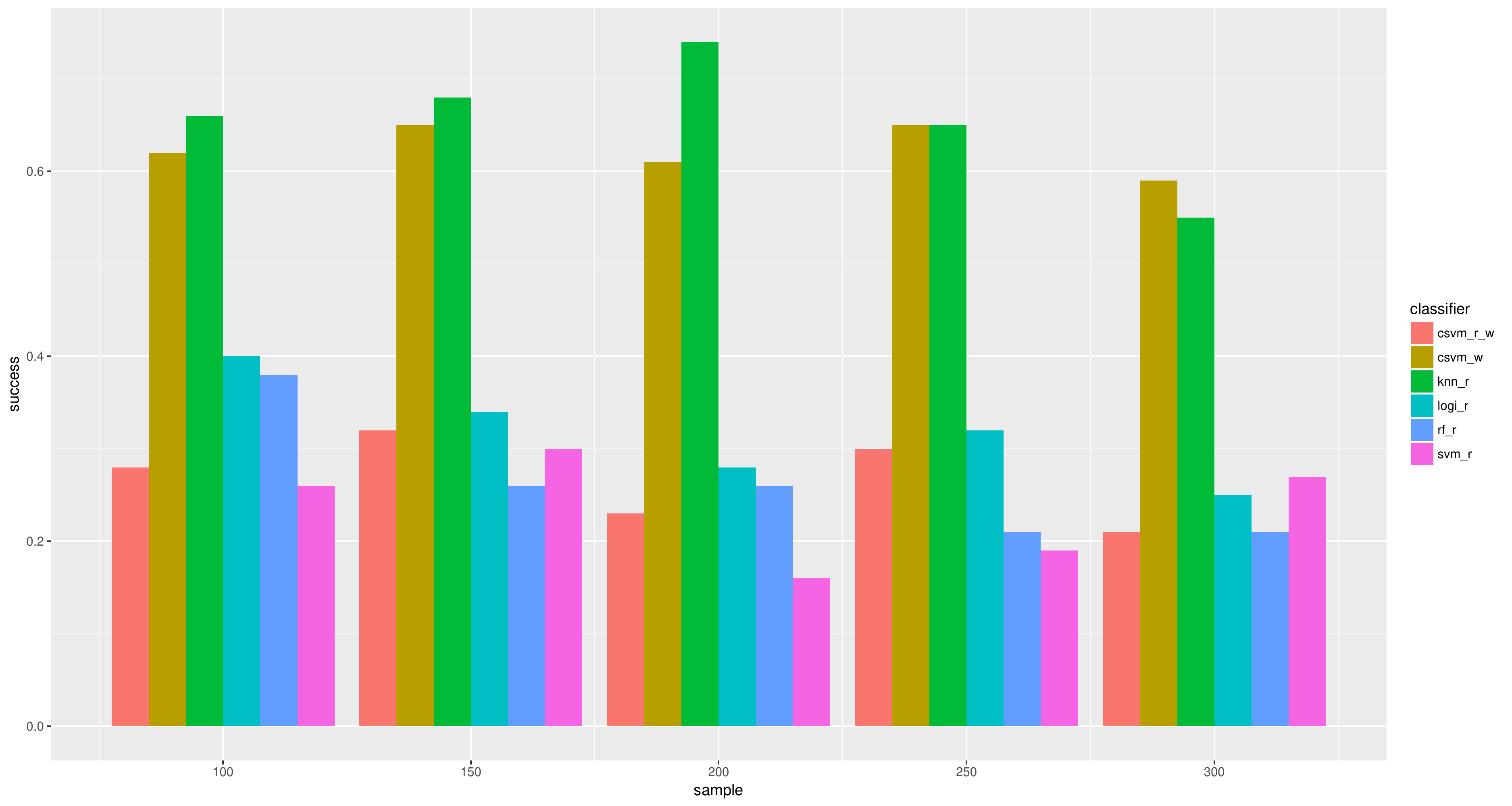}
	\caption{Success rates for all the models. We can see that weighted CSVM and kNN has a larger success rates when the other three have similar success rates. But within those two groups, the proposed model always has a smaller ambiguity.}
\end{figure}

\subsection*{High-dimensional donut}

\begin{figure}[H]
	\includegraphics[height = 6cm,width=1\textwidth]{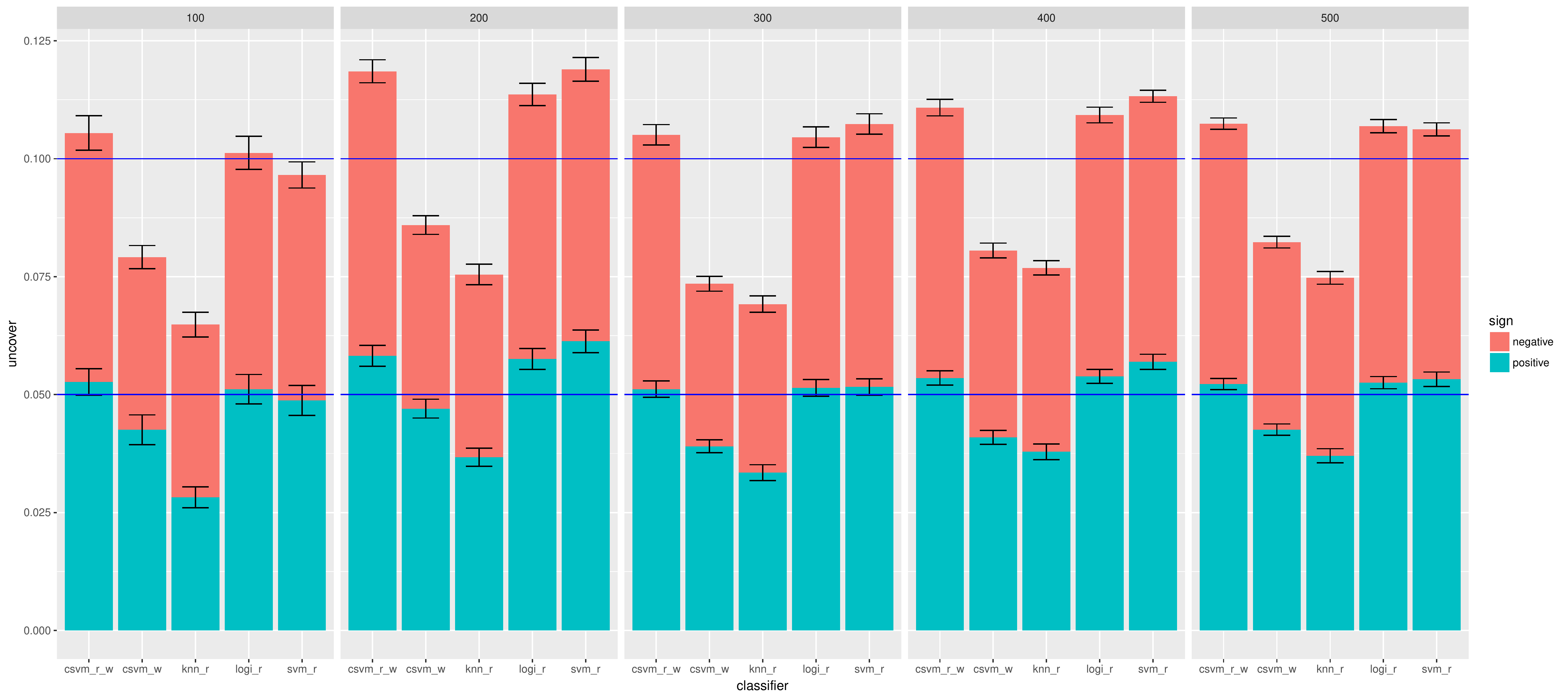}
	\caption{Non-coverage rates for all the models. We can see that weighted CSVM and kNN has a smaller non-coverage rates when the other three have similar non-coverage rates. But within those two groups, the proposed model always has a smaller ambiguity.}
\end{figure}
\begin{figure}[H]
	\includegraphics[height = 6cm,width=1\textwidth]{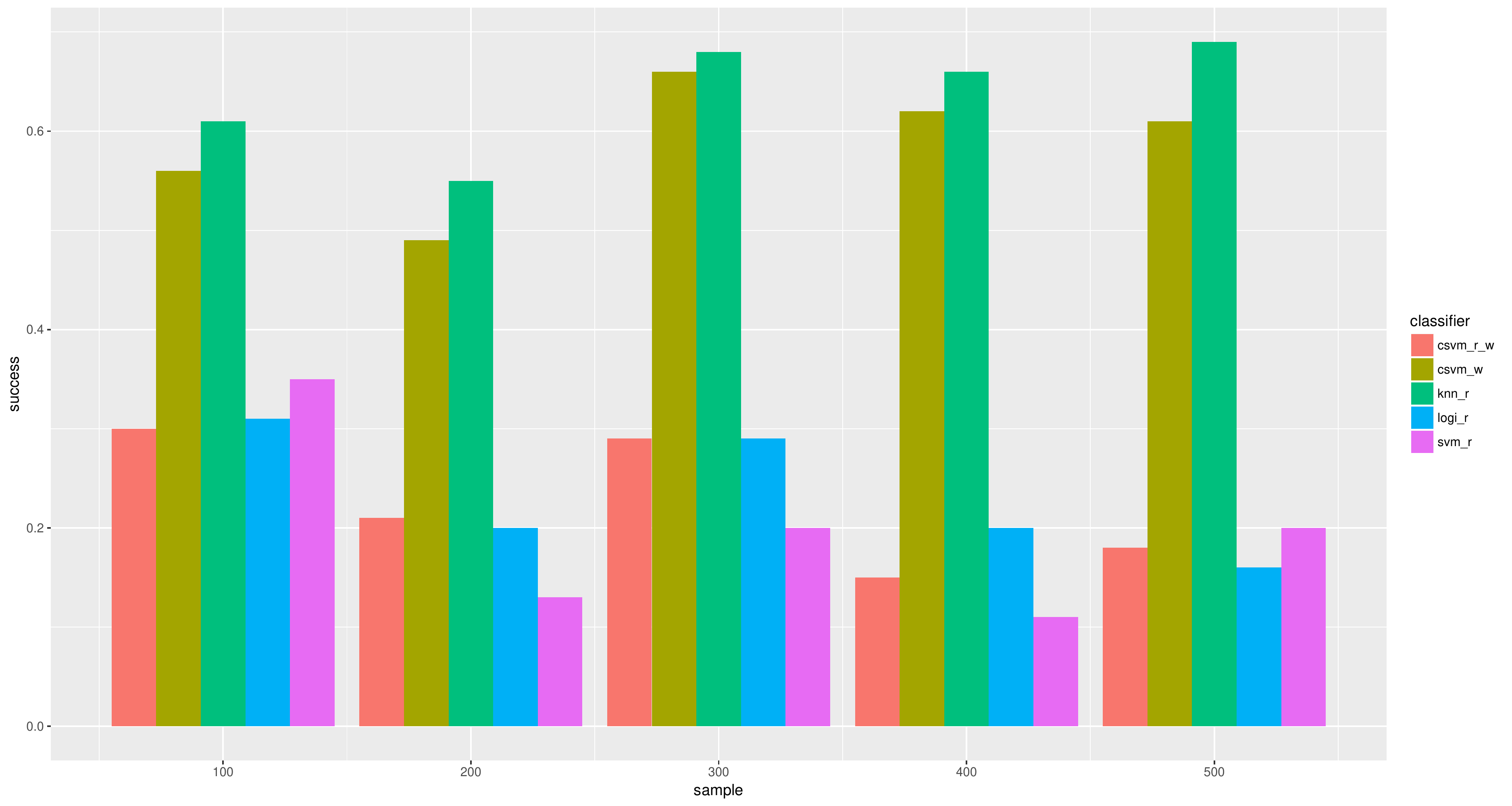}
	\caption{Success rates for all the models. We can see that weighted CSVM and kNN has a larger success rates when the other three have similar success rates. But within those two groups, the proposed model always has a smaller ambiguity..}
\end{figure}

\bibliographystyle{plainnat}
\bibliography{LCSuSVM}